\newcommand{\argmax}{\operatornamewithlimits{argmax}}
\newcommand{\var}{\text{var}}
\newcommand{\squishlist}{\begin{list}{$\bullet$}{\topsep=1pt \parsep=0pt \itemsep=1pt \leftmargin=1em }} 
\newcommand{\squishend}{\end{list}}
\newcommand{\beitemize}{\begin{list}{$\bullet$}{}} 
\newcommand{\enitemize}{\end{list}}
\theoremstyle{plain}
\newtheorem{theorem}{Theorem}[section]
\newtheorem{proposition}[theorem]{Proposition}
\newtheorem{corollary}[theorem]{Corollary}
\theoremstyle{definition}
\theoremstyle{remark}
\icmltitlerunning{DECAF: Learning to be Fair in Multi-agent Resource Allocation}
\begin{document}

\twocolumn[
\icmltitle{DECAF: Learning to be Fair in Multi-agent Resource Allocation}



\icmlsetsymbol{equal}{*}

\begin{icmlauthorlist}
\icmlauthor{Ashwin Kumar}{sch}
\icmlauthor{William Yeoh}{sch}
\end{icmlauthorlist}

\icmlaffiliation{sch}{Washington University in St Louis, USA}

\icmlcorrespondingauthor{Ashwin Kumar}{ashwinkumar@wustl.edu}
\icmlcorrespondingauthor{William Yeoh}{wyeoh@wustl.edu}

\icmlkeywords{Reinforcement Learning, ICML, Multiagent Systems, Reosurce Allocation, Fairness}

\vskip 0.3in
]




\begin{abstract}
A wide variety of resource allocation problems operate under resource constraints that are managed by a central arbitrator, with agents who evaluate and communicate preferences over these resources. We formulate this broad class of problems as  \emph{Distributed Evaluation, Centralized Allocation (DECA)} problems and propose methods to learn fair and efficient policies in centralized resource allocation. Our methods are applied to learning long-term fairness in a novel and general framework for fairness in multi-agent systems. We show three different methods based on Double Deep Q-Learning: (1) A joint weighted optimization of fairness and utility, (2) a split optimization, learning two separate Q-estimators for utility and fairness, and (3) an online policy perturbation to guide existing black-box utility functions toward fair solutions. Our methods outperform existing fair MARL approaches on multiple resource allocation domains, even when evaluated using diverse fairness functions, and allow for flexible online trade-offs between utility and fairness.
\end{abstract}

\section{Introduction}

\label{sec:intro}

AI has become an essential component of many modern systems, playing a crucial role in automating complex decision-making processes. Increasingly, AI algorithms are used to make decisions that impact millions of people. In multi-agent settings, these decisions are often optimized for overall system utility. However, this utilitarian approach can introduce biases, making fairness an important consideration in AI-driven decision-making.

We introduce a class of problems called \emph{Distributed Evaluation, Centralized Allocation (DECA)}. To the best of our knowledge, prior work has addressed these problems separately using domain-specific solutions~\citep{shah2020neural, qin2022RLforRides,kube2019allocating, kube2023community}. In this paper, we take a first step toward unifying them under a single DECA framework and propose fairness-oriented methods that apply broadly across DECA problems.

In DECA, multiple agents act within an environment while a central controller coordinates their behavior to ensure resource constraints and environmental requirements are met. Each agent evaluates its own actions (Distributed Evaluation, DE), and the central controller aggregates these evaluations, optimizes for system-wide utility, and assigns actions accordingly (Centralized Allocation, CA). These problems are dynamic in nature, 
with time-varying resources and agents with variable action spaces. 
This makes DECA both computationally challenging and practically significant.\footnote{
Note that DECA is an execution paradigm, not a learning paradigm like Centralized Training/Decentralized Execution (CTDE). CTDE can be used to train agents that operate in a DECA environment, as we do in this work.
}

Fairness in DECA-based decision-making is critical, as algorithmic biases can lead to disparities and decreased trust in automated systems~\citep{mehrabi2021MLBiassurvey}. Beyond ethical considerations, fair resource allocation may also be desirable from the perspective of the central controller. Standard DECA solutions rely on estimating agent utilities and solving a constrained optimization to compute the best action for each agent. To improve fairness, we propose a framework that integrates fairness estimation into DECA, allowing for more balanced allocations. Specifically, we introduce three optimization strategies:

\squishlist
\item \textbf{Joint Optimization (JO)}: A scalarized multi-objective learning approach that jointly optimizes for fairness and utility.
\item \textbf{Split Optimization (SO)}: A method that learns separate fairness and utility estimators, enabling online trade-off adjustments for fairness and utility.
\item \textbf{Fair-Only Optimization (FO)}: A fairness-focused approach that modifies an existing black-box utility function to incorporate fairness considerations.
\squishend

Our approach is broadly applicable across different domains and fairness metrics, as demonstrated through empirical evaluations. We compare these methods and show how each offers unique advantages in different scenarios, using variance in agent utilities as a fairness metric. Notably, the Split and Fair-Only Optimization approaches enable real-time tuning of the fairness-utility trade-off, an important consideration for real-world applications that has been overlooked in prior work on fairness in multi-agent RL.

This work addresses a critical gap in the literature by providing a unified framework for fairness in DECA problems, which encompass a wide range of multi-agent decision-making scenarios. Through our proposed optimization strategies (JO, SO, and FO), we offer a flexible framework for balancing fairness and efficiency in real time. Furthermore, we present the first general approach for integrating fairness into multi-agent resource allocation using Q-learning, paving the way for future advancements in fair AI decision-making.

\section{Related Work}
\label{sec:related}

While DECA has not been formalized in prior work, it has seen application in many domains. From optimizing passenger-driver matches in ridesharing~\citep{shah2020neural, qin2022RLforRides} to efficient allocation of homelessness resources~\citep{kube2019allocating, kube2023community}, many real-world applications follow this general structure. 
It is also analogous to the predict-then-optimize (P+O) approach \citep{wang2021PtO, elmachtoub2022smartPtO}, where a predictive model estimates unknown parameters that are subsequently used in optimization. However, unlike P+O, which may not specifically address resource allocation or multi-agent systems, DECA explicitly focuses on these complexities. This distinction is crucial as it allows us to restrict the problem space and tailor our research towards enhancing fairness within multi-agent resource allocation.

Significant research has addressed algorithmic bias, where ML models, such as those used in hiring decisions \citep{raghavan2020hiringbias}, can exhibit harmful biases. 
We refer readers to an extensive survey by \citet{mehrabi2021MLBiassurvey} for a review of recent work.
These studies typically focus on debiasing the outputs of predictive models to meet fairness criteria such as equalized odds~\citep{hardt2016equality} or demographic parity~\citep{DP_dwork2012}. However, our work diverges from this approach. Instead of correcting biases in predictions, we aim to develop algorithms that inherently promote fair decision-making via the actions they optimize.

Our focus in this paper is on designing ways to learn fair policies in a multi-agent RL setting. \citet{FairRLSurvey} present a survey on RL methods used to improve fairness. We now highlight a few papers that are the closest to our work: FEN~\citep{jiang2019FEN} uses a hierarchical network to learn a fair-efficient policy for multi-agent coordination, learning to optimize the coefficient of variation, with a meta network that selects when each agent behaves greedily or fairly. However, the model does not allow for resource constraints, instead opting for a first-come-first-serve approach. Further, this approach needs communication between agents to allow agents to choose between acting fairly and efficiently. Some methods, on the other hand, propose to optimize fairness in a multi-objective MDP, where each agent's utility is treated as a separate objective, and the goal is to optimize the a social welfare function over agent rewards~\citep{zimmer2021MOMDP,siddique2020MOMDP}. 
This means the learning agent has to predict the utility over the joint action space~\citep{siddique2020MOMDP}, or, as done by SOTO~\citep{zimmer2021MOMDP}, use a decentralized policy gradient based approach, which prevents use of global constraints. 
Finally, SI~\citep{SI_kumar2023} is an approach for improving fairness in rideshare-matching that attempts to improve fairness through myopic fairness post-processing of black-box utility estimates. However, it does not attempt to learn long-term fairness, and is specially designed for the ridesharing domain.

The DECA approach allows us to consider global constraints while allocating resources, opening up the scope for better global solutions, which none of the prior approaches allow. The distributed evaluation allows each agent to only learn a local value function, which reduces the complexity when compared to learning a joint policy. Further, our Split and Fair-Only approaches allow changing the trade-offs between utility and fairness post-training, which provides additional flexibility that previous approaches lack.

\section{Problem Formulation}
\label{sec:problem_formulation}
In the context of Distributed Evaluation, Centralized Allocation (DECA), our primary goal is to integrate fairness into the decision-making process of resource allocation in multi-agent systems. Formally, we seek to maximize a combined measure of system utility and fairness, represented as:
\begin{align}
    \max \, (1-\beta) \mathcal{U}_T + \beta \mathcal{F}_T \label{eq:objective}
\end{align}
where $\mathcal{U}_T$ denotes the total utility at time $T$ and $\mathcal{F}_T$ represents the fairness measure, weighted by~$\beta$. 

In this section, we describe the DECA optimization framework and how it can be used for resource allocation. In the next section, we will describe DECAF, our solution to learn to improve fairness in this framework.

\subsection{Distributed Evaluation, Centralized Allocation (DECA)}

We define the DECA framework through a temporal resource allocation problem formulated as a Constrained Multi-agent MDP~\citep{CMMDP_de2021}. Our model is described by the tuple $\mathcal{M}$ with the following components:
\begin{align}
\mathcal{M} = \langle \alpha, S, \mathcal{O}, \{A_i\}_{i \in \alpha}, T, \{R_u\}_{i \in \alpha}, \gamma, c \rangle 
\end{align}
\squishlist
    \item $\alpha$ is the set of agents indexed by $i$ ($n$ agents).
    \item $S$ is the global state space.
    \item $\mathcal{O}: S \rightarrow O_1 \times O_2 \times \ldots \times O_n$ is the observation function that maps the true state to agent observations.
    \item $A_i$ is the action space for agent $i$.
    \item $T: S \times A_1 \times A_2 \times \ldots \times A_n \times S \rightarrow [0,1]$ represents the joint transition probabilities.
    \item $R_u: S \times A_1 \times A_2 \times \ldots \times A_n  \rightarrow \mathbb{R}^n$ denotes the (utility) reward function, which returns a vector of rewards, one for each agent.
    \item $\gamma$ is the discount factor for future rewards.
    \item $c: A_1 \cup A_2 \cup \ldots \cup A_n \rightarrow \mathbb{R}^K$ maps each action to its resource consumption for $K$ types of resources.
\squishend

In a DECA problem, illustrated by Figure~\ref{fig:deca}, agents independently evaluate actions based on their local observations (DE), while a central controller aggregates these evaluations and optimizes resource allocation subject to constraints (CA). 
Agent actions include the null action and may have other effects apart from allocation of resources (e.g. moving in an environment), but only actions which consume resources are constrained. 

\begin{figure}[t]
    \centering
    \includegraphics[width=0.8\linewidth]{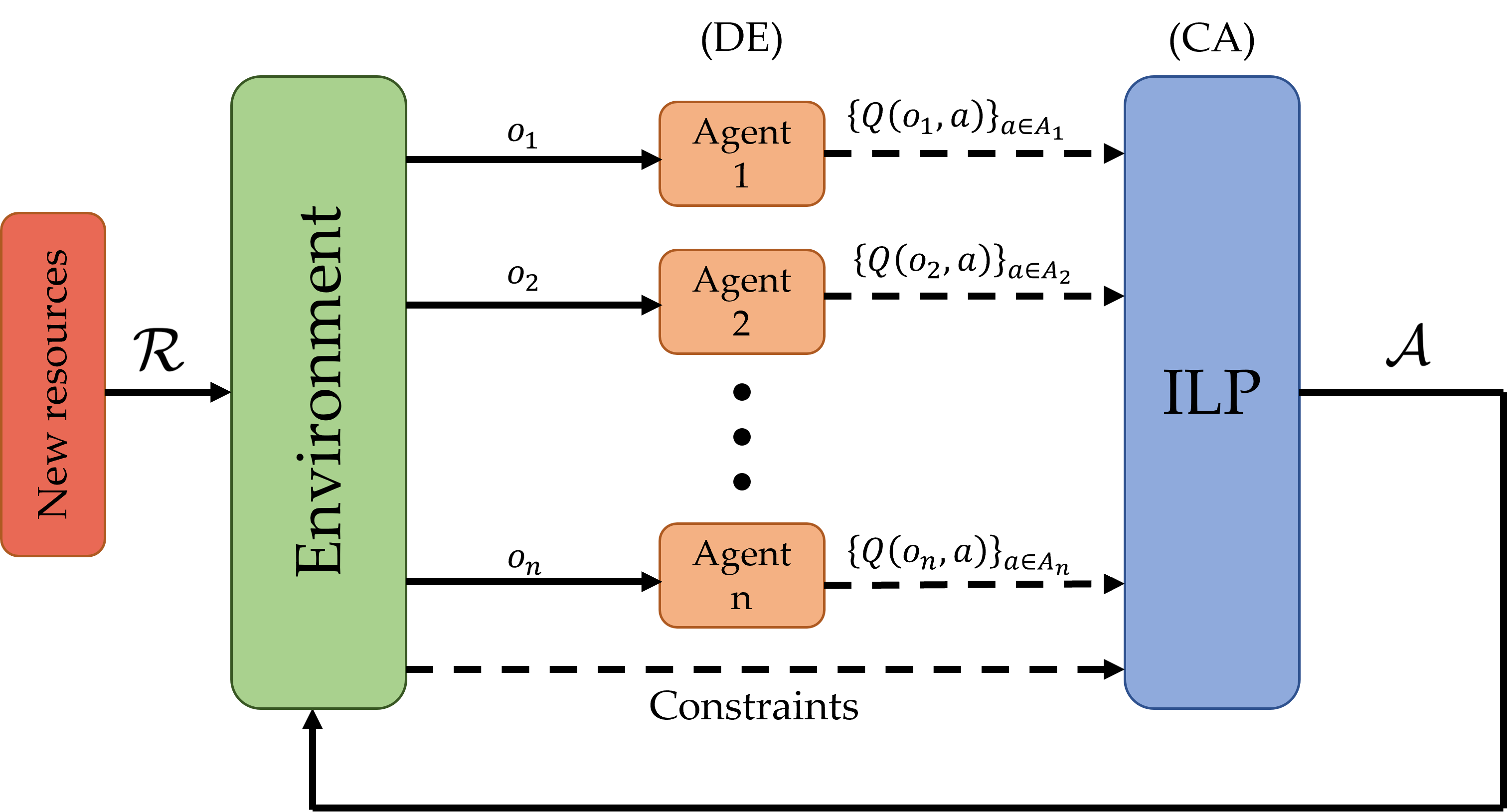}
    \smallskip
    \caption{An outline of the DECA pipeline. Each agent evaluates its available actions in a decentralized manner (DE), and the ILP finds the best joint action $\mathcal{A}$ using these evaluations and resource constraints (CA). }
    \label{fig:deca}
\end{figure}

\begin{figure*}[t]
    \centering
    \begin{subfigure}[t]{0.30\textwidth}
        \centering
        \includegraphics[height=1.5in]{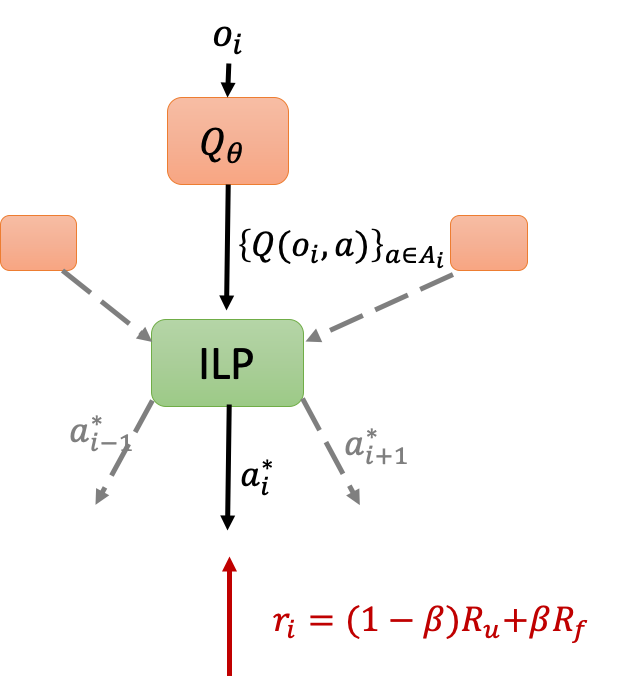}
        \caption{Joint Optimization (JO)}
    \end{subfigure}
    ~~
    \begin{subfigure}[t]{0.33\textwidth}
        \centering
        \includegraphics[height=1.5in]{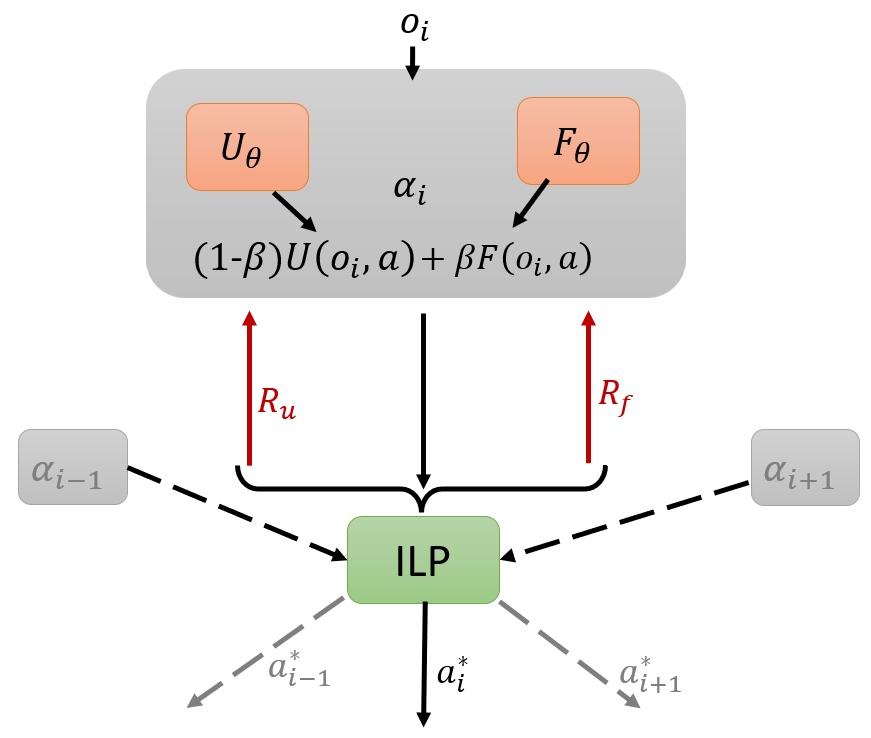}
        \caption{Split Optimization (SO)}
    \end{subfigure}
    ~~
    \begin{subfigure}[t]{0.33\textwidth}
        \centering
        \includegraphics[height=1.5in]{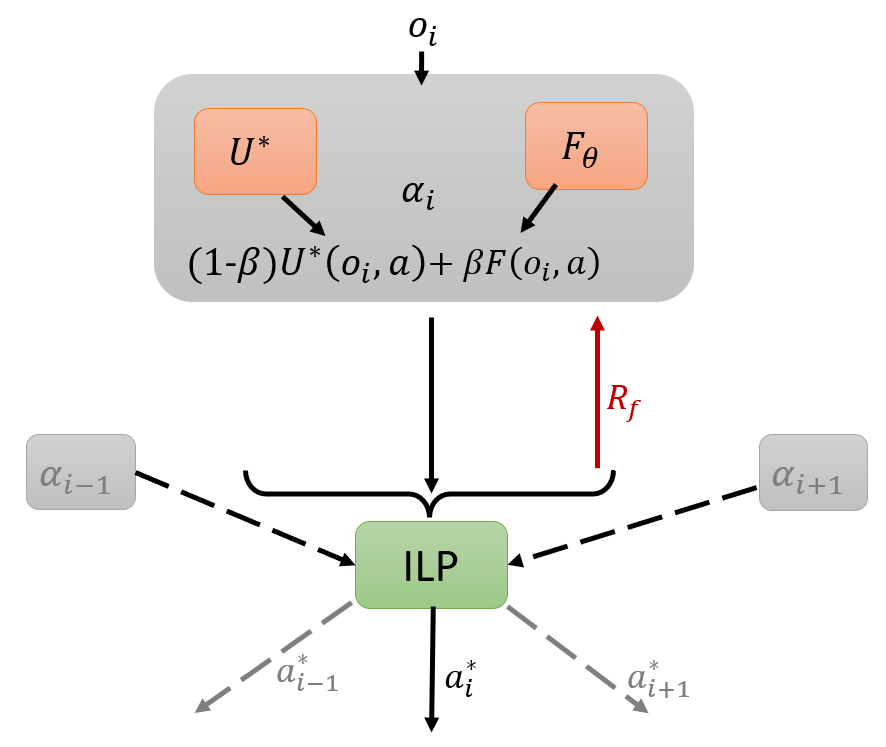}
        \caption{Fair-Only (FO)}
    \end{subfigure}%
    \caption{Illustration of our three DECAF methods to learn fairness. Each subfigure shows how the values propagate for a single agent. The red lines and text denote the actual reward to the learning model, which is used to update weights using TD learning. (a) Joint Optimization learns to predict a single combined value. (b) Split Optimization learns two separate estimators for utility and fairness, and combines their output. (c) Fair-Only assumes a black-box utility model $U^*$, and learns a fairness estimator only, combining their outputs to make decisions.}
    \label{fig:decaf}
\end{figure*}

\subsection{Optimization Framework}
The distributed evaluation (DE) step involves agents learning to predict the utilities of observation-action pairs using approaches such as Deep Q-learning~\citep{mnih2013playing,van2016DDQN}. The utility estimates are computed using partially-observable post-decision states, which are estimated locally, ignoring other agents' actions due to the infeasibility of exploring the joint state space.

The central allocation (CA) step solves a integer linear program that combines predicted utilities and resource constraints. 
Let $\mathcal{A}$ denote the allocation of actions decided by the central allocator such that $\mathcal{A}_i$ is the action assigned to agent $i$,
and let there be $K$ types of resources, each represented by $k\in\{1,2,\dots,K\}$, such that the number of resources available can be written as $\mathcal{R}\in\mathbb{R}^K$. 
We thus say that each resource type $k$ has an availability $\mathcal{R}_k$. This gives us the following optimization with decision variables $x_i(a)$:
\begin{align}
    \max_{x_i(a) \in \{0,1\}} \sum_{i \in \alpha} \sum_{a \in A_i} x_i(a) Q(o_i,a) \label{eq:Opt_DECAF}
\end{align}
subject to:
\begin{align}
    \sum_{a \in A_i, x_i(a) \in \{0,1\}} x_i(a) &= 1, \quad \forall i \in \alpha \label{eq:action_constraint}\\
    \sum_{a \in \mathcal{A}} c(a)_k \le \mathcal{R}_k, & \quad \forall k \in \{1, \ldots, K\} \label{eq:resource_constraint}
\end{align}
These constraints ensure that each agent is assigned exactly one action and that total resource usage does not exceed available supplies.
The ILP described above forms the central controller, and the Q-value estimator controls distributed evaluation. This offers benefits over completely distributed approaches by encapsulating resource constraints, and over completely centralized approaches by reducing the complexity of the learning objective. 
This setup is seen in many resource allocation problems~\citep{kube2019allocating, ride_alonso, shah2020neural}.

\section{DECAF: Fairness in DECAs}
\label{sec:fairness_in_decas}

In this section, we describe DECAF, our framework for incorporating fairness into DECA problems using Q-Learning. Specifically, we detail how we can specify and learn the fairness objective in Eq.~\ref{eq:objective} through the use of decomposed fairness rewards, and a modified Q-Learning algorithm to learn from these rewards using centralized training.

\subsection{Fairness Reward}

Previous work has considered learning to optimize a single social welfare function (SWF) that captures the notions of fairness and utility together, like the Coefficient of Variation used by FEN~\cite{jiang2019FEN} or $\alpha$-fairness and the Generalized Gini Function (GGF) used by SOTO~\cite{zimmer2021MOMDP}. With DECAF, we try to learn a class of objectives that trade off between system utility and fairness, characterized by a trade-off variable $\beta$ (Eq.~\ref{eq:objective}), with the aim of enabling flexible trade-offs between the two. 

Let $\textbf{Z}=\{z_i\}_{i\in \alpha}$ denote the vector of accumulated agent utilities (averaged or total). We interpret this utility as `accumulated wealth,' and look to make allocations that can result in a fairer distribution of this wealth. Let $\textbf{Z}_t^\pi$ represent the distribution of agent utility metrics at time $t$ following policy $\pi$. Then, we define a fairness function $\mathbb{F}:\mathbb{R}^{n}\rightarrow \mathbb{R}$ as a mapping of vector $\textbf{Z}$ to a real value, and our fairness objective is maximizing $\mathcal{F}_T = \mathbb{F}(\textbf{Z}_{t=T}^\pi)$.  Most popular SWFs and fairness functions can be cast into this form. 

In our work, we consider optimizing this objective by computing a fairness reward $R_f(\mathbf{s},\mathcal{A})$ that provides a vector of rewards, one for each agent, that captures how their current action contributed to the system fairness. We do this by using the per-step fairness change because of an allocation:
\begin{align}
    \Delta \mathcal{F}|\mathcal{A}^t 
    &= \mathcal{F}_{t+1} - \mathcal{F}_t \\
    &= \mathbb{F}(\textbf{Z}_{t+1}) - \mathbb{F}(\textbf{Z}_t) 
\end{align}
A naive way to decompose this reward is to evenly divide it among agents. This is commonly done in collaborative multi-agent RL when agents are optimizing a shared goal.
\begin{align}
    R_f(\mathbf{s}_t, \mathcal{A}^t) = \left[ \frac{\Delta \mathcal{F}|\mathcal{A}^t}{n} \right]_{i\in\alpha}
\end{align}
Alternatively, specialized decompositions can be designed to give a more informative signal to each agent. For example, if variance is used as the fairness function ($\mathcal{F}_t = -\var(\textbf{Z}_t)$):
\begin{align}
    \Delta \mathcal{F}|\mathcal{A}^t 
    &= -\var(\textbf{Z}_{t+1}) + \var(\textbf{Z}_t) \\
    &= -\frac{1}{n}\sum_{i\in\alpha}\left( z^{t+1}_i - \bar{z}_{t+1} \right)^2 + \frac{1}{n}\sum_{i\in\alpha}\left( z^{t}_i - \bar{z}_{t} \right)^2 \\
    R_f(\mathbf{s}, \mathcal{A}) &= \left[ -\frac{1}{n} \left( z_i^{t+1} - \bar{z}_{t+1} \right)^2 + \frac{1}{n}\left( z_i^{t} - \bar{z}_{t} \right)^2  \right]_{i \in \alpha} \label{eq:var_fair_reward}
\end{align}
Observe that this reward only depends on the agent's own metric value and the average metric. Thus, each iteration, apart from the local observation, each agent only needs to be communicated information about the average utility of all agents to reliably predict this value. This could be done by the central agent, or by message passing between the agents. 

For the main experiments of this paper, we use variance as our fairness function, with the reward function in Eq.~\ref{eq:var_fair_reward} as the fair reward. However, our methods are not limited to using variance. We also provide reward decompositions and experiments with other fairness functions including $\alpha$-fair, GGF, and maximin functions in the supplement, showing the generality of our approach. 

\subsection{Algorithms}

Given the fair reward $R_f$ described above, our approach targets the DE step to improve fairness, by changing the Q-values used in the ILP (Eq.~\ref{eq:Opt_DECAF}) to also account for fairness. We do this modifying $Q$ to be an estimator of the combined fair-efficient objective, with a weight $\beta\in [0,1]$ used to regulate relative value of fairness and utility. 

We use experience replay with centralized training to learn the Q-function, where an experience $\tau=\langle \mathbf{o},\mathcal{A},\mathbf{r}_u,\mathbf{r}_f,\mathbf{o}'\rangle$ stores a joint transition across all agents, with utility rewards $\mathbf{r}_u$ and fair rewards $\mathbf{r}_f$. Let $\theta$ denote the parameters of the Q-function. Given a replay buffer $\mathcal{D}$, we want to minimize the loss function $J_\theta=\mathbb{E}_{\tau\sim \mathcal{D}}L(\delta(\tau))$, where $\delta(\tau)$ is the Bellman error of the transition $\tau$, and $L$ is the MSE loss.
We propose three approaches for integrating fairness, illustrated in Figure~\ref{fig:decaf}. \footnote{Unless stated, we use bold terms to denote vectors, and overload Q-functions to also operate on vectors to compute a vector of outputs. Further, we use $Q(\mathbf{o})$ as a shorthand for computing Q-values for all possible actions for each observation in $\mathbf{o}$.}

\squishlist
    \item \textbf{Joint Optimization (JO):} A single estimator optimizes a weighted combination of fairness and utility.
    \begin{align}
    \delta(\tau) = (1-\beta) \mathbf{r}_u + \beta  \mathbf{r}_f + \gamma Q_\theta(\mathbf{o}') - Q_\theta(\mathbf{o}, \mathcal{A}) \label{eq:JO}
\end{align}
    \item \textbf{Split Optimization (SO):} Separate estimators for fairness ($F_\theta(\cdot)$) and utility ($U_\theta(\cdot)$) allow dynamic adjustment of their trade-off during policy execution.
    \begin{align}
    \delta^f(\tau) &= \mathbf{r}_f + \gamma F_\theta(\mathbf{o}') - F_\theta(\mathbf{o}, \mathcal{A}) \\
    \delta^u(\tau) &= \mathbf{r}_u + \gamma U_\theta(\mathbf{o}') - U_\theta(\mathbf{o}, \mathcal{A}) \\
    Q(\mathbf{o}, \mathcal{A}) &= (1-\beta) U_\theta(\mathbf{o}, \mathcal{A}) + \beta F_\theta(\mathbf{o}, \mathcal{A}) \label{eq:SO}
\end{align}
    \item \textbf{Fair-Only Optimization (FO):} A fairness estimator ($F_\theta(\cdot)$) adjusts a pre-existing utility function $U^*(\cdot)$ to incorporate fairness, useful when utility functions are provided externally.
    \begin{align}
    \delta^f(\tau) &= \mathbf{r}_f(s,a) + \gamma F_\theta(\mathbf{o}') - F_\theta(\mathbf{o}, \mathcal{A}) \\
    Q(\mathbf{o}, A) &= (1-\beta) U^*(\mathbf{o}, \mathcal{A}) + \beta F_\theta(\mathbf{o}, \mathcal{A}) \label{eq:FO}
\end{align}
\squishend

Our learning algorithm is based on Double Deep Q-Learning~\cite{van2016DDQN}, which uses a target network to stabilize updates. The key differentiating factor is in how the target values are computed. When learning from an experience, we compute the optimal action $\mathcal{A}^*$ in the successor state by solving the ILP~(Eq.~\ref{eq:Opt_DECAF}) using the online Q-network, and then compute the Q-value of the selected actions using the target network. The models are updated using the rewards (stored in the experience) obtained after the ILP allocation of the previous state, as shown in the red text and arrows in Figure~\ref{fig:decaf}. 
For SO and FO, we package the utility and fairness estimators into the Q-function, and compute the optimal successor action using both. Then, we independently update each estimator using the TD error of their respective objectives. We provide the pseudocode for the learning algorithms in Appendix \ref{sec:appendixAlgo}.
For FO, we skip training the utility estimator.
SO and FO offer the additional benefits of interpretability, as during execution, we are able to discern how much of the decision was based on the utility gain and fairness improvement respectively.

SO also provides some useful properties described below.
\begin{theorem}
\label{th:theorem_fair}
Given perfect estimates for utility and fairness, increasing $\beta$ always improves the one-step fairness gain for SO with $\gamma=0$.
\end{theorem}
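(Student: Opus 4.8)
The plan is to show that, under the three hypotheses, the central ILP driving SO reduces to maximizing one fixed linear scalarization of two $\beta$-independent quantities over a $\beta$-independent feasible set, after which the statement is the textbook monotonicity of weighted bi-objective optimization.

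\emph{Step 1 (reduction of SO's decision rule).} Fix a state $\mathbf{s}_t$ with observation $\mathbf{o}$. With $\gamma=0$ the return of an action is just its immediate reward, so for SO the estimators become one-step quantities; ``perfect estimates'' therefore means that for every feasible allocation $\mathcal{A}$ the sum $\sum_{i}U_\theta(o_i,\mathcal{A}_i)$ equals the total one-step utility $U(\mathcal{A})$ and $\sum_{i}F_\theta(o_i,\mathcal{A}_i)$ equals the one-step fairness gain $F(\mathcal{A}):=\Delta\mathcal{F}\,|\,\mathcal{A}$ (the latter using that the fairness-reward decomposition telescopes, $\sum_i[\mathbf{r}_f]_i=\Delta\mathcal{F}|\mathcal{A}$, which holds for the even split and, by the computation preceding Eq.~\eqref{eq:var_fair_reward}, for the variance decomposition). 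Substituting Eq.~\eqref{eq:SO} into the linear ILP objective Eq.~\eqref{eq:Opt_DECAF} and using $\sum_{i,a}x_i(a)Q(o_i,a)=\sum_i Q(o_i,\mathcal{A}_i)$, SO with parameter $\beta$ selects
\begin{align}
\mathcal{A}(\beta)\ \in\ \argmax_{\mathcal{A}\in\Phi}\big[(1-\beta)\,U(\mathcal{A})+\beta\,F(\mathcal{A})\big],
\end{align}
where $\Phi$ is the set of allocations satisfying the one-hot and resource constraints Eqs.~\eqref{eq:action_constraint}--\eqref{eq:resource_constraint}. Note that $\Phi$ is finite, nonempty (all-null is feasible), and independent of $\beta$, so $\mathcal{A}(\beta)$ is well defined and $F(\mathcal{A}(\beta))$ is exactly the one-step fairness gain realized by SO at parameter $\beta$.

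\emph{Step 2 (monotonicity in $\beta$).} Take $0\le\beta_1<\beta_2\le 1$ and abbreviate $U_j=U(\mathcal{A}(\beta_j))$, $F_j=F(\mathcal{A}(\beta_j))$. Optimality of $\mathcal{A}(\beta_1)$ at $\beta_1$ and of $\mathcal{A}(\beta_2)$ at $\beta_2$ over the common set $\Phi$ gives
\begin{align}
(1-\beta_1)U_1+\beta_1 F_1 &\ge (1-\beta_1)U_2+\beta_1 F_2, \\
(1-\beta_2)U_2+\beta_2 F_2 &\ge (1-\beta_2)U_1+\beta_2 F_1 .
\end{align}
Multiply the first inequality by $(1-\beta_2)\ge 0$ and the second by $(1-\beta_1)>0$ and add: the $U$-terms cancel and the result simplifies to $(\beta_1-\beta_2)(F_1-F_2)\ge 0$, hence $F_2\ge F_1$. (Alternatively, for $\beta<1$ reparametrize $\lambda=\beta/(1-\beta)$, which is increasing in $\beta$; maximizing $(1-\beta)U+\beta F$ is the same as maximizing $U+\lambda F$, and adding the two optimality inequalities directly yields $(\lambda_2-\lambda_1)(F_2-F_1)\ge 0$.) Thus the one-step fairness gain is non-decreasing in $\beta$.

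\emph{Where the difficulty lies.} The inequality in Step 2 is a one-line exchange argument; the real content is Step 1, which is also the only place the hypotheses are used -- $\gamma=0$ removes the bootstrap term, perfect estimates identify the summed estimators with the true one-step totals, and the telescoping of $R_f$ makes the $\beta$-weighted component equal to the genuine fairness change $\Delta\mathcal{F}|\mathcal{A}$ rather than some proxy. I would also be explicit that ``improves'' must be read weakly: when the ILP admits several optimizers the bound holds for every tie-breaking rule, but strict improvement can fail (e.g.\ when a utility-maximizing allocation already maximizes $F$), so the precise claim is monotone non-decrease. A final remark worth making is that perfect \emph{utility} estimates are not actually needed -- the argument only uses that the utility component of the ILP objective does not depend on $\beta$, so any fixed utility surrogate would suffice.
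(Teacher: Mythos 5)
Your proof is correct and takes essentially the same route as the paper: the paper's appendix proof is exactly the two-inequality exchange argument of your Step 2, carried out after reparametrizing to $\eta=\beta/(1-\beta)$ (the alternative you mention parenthetically). Your Step 1 and the closing remarks are somewhat more careful than the paper about what ``perfect estimates'' buys, the telescoping of $R_f$, and the fact that the conclusion is weak monotonicity rather than strict improvement, but the substance of the argument is identical.
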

\noindent \textbf{Proof Sketch:} We show this using the property of the ILP. The only way another action is selected when $\beta$ is increased is if the objective value of that action is higher. Since $\beta\ge0$, comparing the objective value of two allocations shows us that for $\beta'>\beta$, this only happens when the fairness gain for the new action is higher. $\hfill \Box$

\begin{theorem}
\label{th:theorem_fairest}
For a large enough $\beta$, the fairest allocation will be selected with perfect utility and fairness estimators for SO with $\gamma=0$.
\end{theorem}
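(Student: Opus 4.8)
The plan is to reduce the SO decision rule to a single-shot weighted optimization over the finite set of feasible joint allocations and then apply a standard ``large-penalty'' (equivalently, ``linear-in-$\beta$ continuity'') argument. First I would note that with $\gamma = 0$ the Bellman targets in Eq.~\ref{eq:SO} collapse to the immediate rewards, so perfect estimators give $U_\theta(o_i,a)$ equal to the (expected) one-step utility of $a$ and $F_\theta(o_i,a)$ equal to its (expected) one-step fairness reward. Because the per-agent fairness-reward decomposition telescopes -- summing either the uniform split or the variance-based terms in Eq.~\ref{eq:var_fair_reward} over all agents recovers $\Delta\mathcal{F}\,|\,\mathcal{A}$ exactly -- the ILP objective in Eq.~\ref{eq:Opt_DECAF} becomes $(1-\beta)\,\mathcal{U}(\mathcal{A}) + \beta\,\Delta\mathcal{F}\,|\,\mathcal{A}$, where $\mathcal{U}(\mathcal{A}) := \sum_{i}U_\theta(o_i,\mathcal{A}_i)$ is the total one-step utility of allocation $\mathcal{A}$. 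So the allocator maximizes a convex combination of total utility and total fairness gain, and the question becomes purely what this objective selects as $\beta$ approaches $1$.

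Next I would invoke finiteness. Since there are finitely many agents, each with a finite action set, the set $\mathcal{X}$ of allocations meeting the assignment and resource constraints (Eqs.~\ref{eq:action_constraint}--\ref{eq:resource_constraint}) is finite and nonempty (the all-null allocation is always feasible, as it consumes no resources). Let $F^\star$ be the maximum of $\Delta\mathcal{F}\,|\,\mathcal{A}$ over $\mathcal{X}$ and let $\mathcal{X}^\star \subseteq \mathcal{X}$ be the allocations attaining it; if $\mathcal{X}^\star = \mathcal{X}$ the claim is trivial, so assume not and let $\varepsilon > 0$ be the smallest positive value of $F^\star - \Delta\mathcal{F}\,|\,\mathcal{A}$ over $\mathcal{X}$ and $M \ge 0$ the largest value of $\mathcal{U}(\mathcal{A}) - \mathcal{U}(\mathcal{A}')$ over $\mathcal{X}\times\mathcal{X}$. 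Comparing the ILP objective of any fairest $\mathcal{A}^\star\in\mathcal{X}^\star$ against any non-fairest $\mathcal{A}$ yields a margin of at least $-(1-\beta)M + \beta\varepsilon$, which is strictly positive once $\beta > M/(M+\varepsilon)$; this threshold lies in $[0,1)$, so for every such $\beta$ the allocator strictly prefers each fairest allocation to each non-fairest one and therefore returns an element of $\mathcal{X}^\star$ (among ties it additionally maximizes utility, which is harmless for the statement). This dovetails with Theorem~\ref{th:theorem_fair}: the selected one-step fairness gain is nondecreasing in $\beta$, and the present bound pins down where it saturates.

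The main obstacle I anticipate is not the penalty argument but making the reduction precise: I must state carefully that ``perfect estimates'' means the estimators equal the conditional expectations of the one-step rewards given $(\mathbf{s},\mathcal{A})$, so that stochastic transitions are handled by replacing every one-step quantity by its expectation (the finiteness and gap constants $\varepsilon, M$ are unaffected), and that the chosen fairness-reward decomposition genuinely sums to $\Delta\mathcal{F}$ so that $\sum_i F_\theta(o_i,\mathcal{A}_i)$ really is the objective being maximized. A smaller point worth making explicit is the tie-breaking convention used to interpret ``the fairest allocation,'' together with the fact that the threshold on $\beta$ is strictly below $1$, so ``large enough $\beta$'' is attainable within the admissible range $\beta\in[0,1]$.
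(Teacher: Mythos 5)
Your proposal is correct and follows essentially the same route as the paper: reduce the $\gamma=0$ SO decision to a one-shot weighted comparison of total utility versus total fairness gain over the finite set of feasible allocations, and exhibit an explicit threshold on the trade-off weight beyond which the fairness margin dominates any utility deficit (the paper does this in the $\eta=\beta/(1-\beta)$ parametrization with bound $\eta_f^u=\frac{U_{max}-U_{tot}(\mathcal{A}_f)}{F_{tot}(\mathcal{A}_f)-F_{max}}$, yours directly in $\beta$ with threshold $M/(M+\varepsilon)$). Your treatment is marginally more careful in handling ties among fairest allocations and in noting the threshold lies strictly below $1$, whereas the paper simply assumes distinct fairness values, but these are refinements of the same argument rather than a different proof.
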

\noindent \textbf{Proof Sketch:}
Since the fairest allocation has the largest fairness gain, we show that there exists a $\beta_f$ such that the fairness gain's contribution to the objective outweighs any utility loss, making it the optimal allocation. $\hfill \Box$

\medskip
We also provide the corollaries to these theorems for improving utility as $\beta$ is reduced in Appendix \ref{sec:appendixTheoretical}, along with full proofs for these theorems. 
These properties also empirically hold when $\gamma\ne0$, as our experiments demonstrate. This adaptability is a major strength of SO: It allows a degree of flexibility that other methods do not possess. Specifically, SO allows users to vary the trade-off weight $\beta$ during runtime, and the behavior can be expected to be monotonic in the direction of change. For $\gamma=0$, Theorem~\ref{th:theorem_fair} and its corollary guarantee that the space of selected allocations is Pareto-efficient with changing $\beta$ at each time step.

\begin{figure*}[ht]
    \centering
    \includegraphics[width=0.98\linewidth]{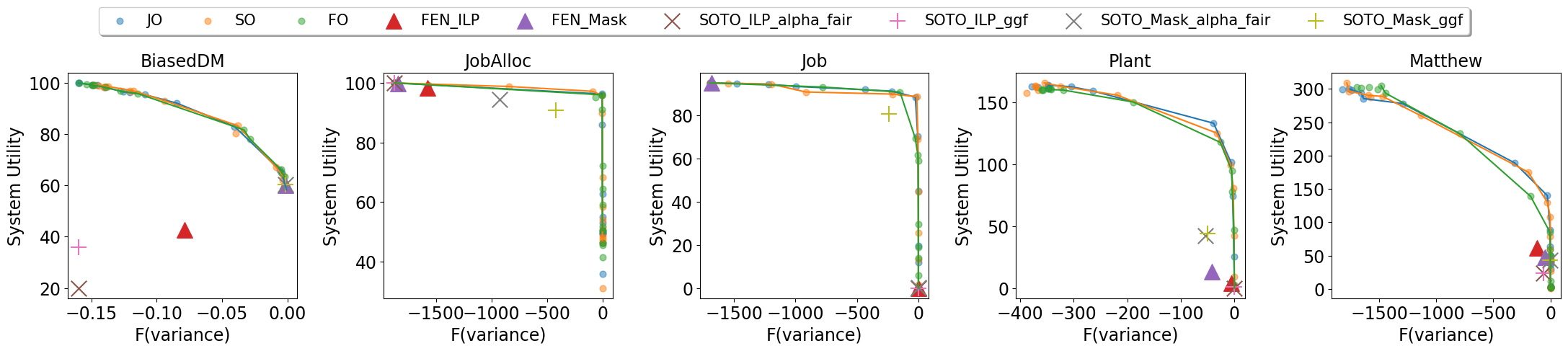}
    \caption{Change in system utility and fairness as $\beta$ is increased, with $\beta=0$ at the top left $\beta=1$ at the bottom-right. For all domains, we can see that split and joint optimization perform similarly, while learning only fairness can sometimes be slightly worse. All our methods Pareto-dominate SOTO and FEN. Each point depicts the average performance over five different models trained at that $\beta$ value, and the lines show the Pareto front for each method.}
    \label{fig:main_results}
\end{figure*}

\section{Experimental Setup}

We conduct experiments for maximizing the objective in Eq.~\ref{eq:objective}, where the system utility is the sum of all agent utilities at the end of an episode, and the fairness is measured as the negative of the variance of agent resources at the end of the episode. We perform experiments for a variety of $\beta$ values, repeating each configuration 5 times for each of our three settings: \textbf{Joint Optimization (JO)}, \textbf{Split Optimization (SO)} and \textbf{Fair-Only Optimization (FO)}. 
We were unable to use off-the-shelf multi-agent RL libraries because of their lack of support for constrained central decision making. Thus, we also implemented our own versions of the learning algorithm (DDQN with $\epsilon$-greedy TD(0) learning), as described in Section~\ref{sec:fairness_in_decas}.
Each model uses the same network architecture, with two hidden layers of dimension 20, and  output of dimension 1.
The utility model used for FO is randomly selected from the JO models trained with $\beta=0$. We included features indicating the relative advantage of each agent as a signal for fairness, in addition to the features describing the local observation of each agent.

\subsection{Environments}
We adapt the environments from \citet{jiang2019FEN} to align with the DECA framework, reformulating them as resource allocation problems with constraints. Additionally, we introduce a new environment, BiasedDM, featuring a biased decision-maker with differing utilities for agents. Below is a description of each environment.

\noindent \textbf{Matthew:} This environment showcases the Matthew effect~\citep{rigney2010matthew, gao2023matthew}, where the rich get richer. Ten agents move on a continuous unit grid with three resources available at a time. Consuming resources grants agents speed and size boosts, allowing faster access to future resources. Some agents start with inherent advantages. Actions involve assigning resources to agents, preventing others from accessing them, or taking a null action to move randomly. Agents provide utility estimates for each action, and the decision-maker allocates resources, ensuring no two agents share the same resource. Agents always move in straight lines toward their targets and are unable to target new resources while moving to collect a previously allocated resource.

\noindent \textbf{Job:} Four agents operate on a discretized grid with a fixed square containing a job. Agents receive rewards for occupying the job's location. Grid locations act as resources, with only one agent allowed per location. Agents move in cardinal directions and communicate directional preferences to the decision-maker, who assigns final moves.

\noindent \textbf{JobAlloc:} This simplified version of the Job environment removes the grid. Agents directly compete to occupy the job, with actions limited to occupying or leaving it. The job can only be claimed if it is unoccupied. This domain's challenge is overcoming the single-step suboptimality when no agent occupies the job.

\noindent \textbf{Plant:} Five agents operate on a discretized grid containing eight resources of three types. Agents must collect specific resource combinations to construct a `unit' and earn rewards. Requirements vary in difficulty across agents. The decision-maker assigns resources based on agents' preferences, ensuring exclusivity. Agents deterministically move toward assigned resources.

\noindent \textbf{BiasedDM:} Unlike the other environments, this environment introduces an explicit bias in decision-making. Five agents compete for a single resource per timestep, where utility to the decision-maker increases with agent index ($0.2 \times i$ for agent $i$). Optimal utility is achieved by always allocating resources to agent 5. Fairness is assessed based on resource distribution over time, highlighting a disconnect between fairness and utility.

\subsection{Baselines}

As we described in our Related Work section, FEN~\cite{jiang2019FEN} and SOTO~\cite{zimmer2021MOMDP} are two most relevant approaches that are generalizable to different domains. We thus compare against them in our experimental evaluations. 
However, they both operate in environments where agents can independently take actions without explicitly accounting for resource constraints, making them incompatible with the DECA framework. We attempt two methods of making constrained decisions with per-agent policies to adapt them:
\squishlist
    \item \textbf{Policy as Q-values:} We treat the action probabilities as Q-values and use them for the central allocation. This is denoted by the \textbf{`\_ILP'} suffix in the experiments.
    \item \textbf{Masked sequential action selection:} We go through agents sequentially and let them sample an action from their policy and assign it to them. Any invalid actions are masked as the resources get consumed. We randomize the order of agents every step to prevent ordering bias. This is denoted by the \textbf{`\_Mask'} suffix in the experiments.
\squishend
We add the extra features that SOTO requires (only for SOTO), and train SOTO with both the $\alpha$-fair and $GGF$ objective described in their paper~\cite{zimmer2021MOMDP}. We also use shared weights across agents in our experiments.

\begin{table*}[t]
\centering
\caption{Evaluation of all models on multiple metrics for each environment. For JO, SO, and FO, the values in the bracket denote the $\beta$ value selected based on the model that maximizes 0.1$\cdot U$ - 0.9$\cdot \var(\textbf{Z})$. The selected $\beta$ value is indicated in brackets. The values in bold are the best in each row.}
\label{tab:BaselineResults}
\resizebox{\textwidth}{!}{
\begin{tabular}{clcccccc}
\hline
\multicolumn{1}{l}{\textbf{Environment}} & \textbf{Metric} & \textbf{JO($\beta$)} & \textbf{SO($\beta$)}  & \textbf{FO($\beta$)} & \textbf{FEN}   & \textbf{SOTO($\alpha$-Fair)} & \textbf{SOTO(GGF)} \\
\hline
\multirow{5}{*}{BiasedDM}                & Alpha Fair      & \textbf{-8.09(1)}    & -8.3(0.999)           & -8.19(0.9995)        & -8.15          & -8.15                        & -8.15              \\
                                         & GGF             & \textbf{0.35(1)}     & 0.3(0.999)            & 0.33(0.9995)         & 0.33           & 0.33                         & 0.33               \\
                                         & Maximin         & \textbf{0.16(1)}     & 0.12(0.999)           & 0.15(0.9995)         & 0.15           & 0.15                         & 0.15               \\
                                         & System Utility  & 58.31(1)             & \textbf{63.65(0.999)} & 63.38(0.9995)        & 59.95          & 60.22                        & 60.24              \\
                                         & Variance        & \textbf{-0.0007(1)}  & -0.0033(0.999)        & -0.0022(0.9995)      & -0.0014        & -0.0015                      & -0.0015            \\
\hline
\multirow{5}{*}{JobAlloc}                & Alpha Fair      & \textbf{12.71(0.2)}  & 12.69(0.2)            & \textbf{12.71(0.2)}  & -35.31         & -7.39                        & 1.06               \\
                                         & GGF             & 43.11(0.2)           & 43.41(0.2)            & \textbf{43.8(0.2)}   & 12.59          & 19.5                         & 29.62              \\
                                         & Maximin         & 21.79(0.2)           & 22.21(0.2)            & \textbf{22.71(0.2)}  & 0              & 3.49                         & 10.62              \\
                                         & System Utility  & 96.28(0.2)           & 95.81(0.2)            & 96(0.2)              & \textbf{99.89} & 94.42                        & 90.92              \\
                                         & Variance        & -4.44(0.2)           & -2.54(0.2)            & -1.48(0.2)           & -1839.76       & -923.46                      & -421.93            \\
\hline
\multirow{5}{*}{Job}                & Alpha Fair      & 10.97(0.2)           & \textbf{12.07(0.2)}   & 10.77(0.2)           & -35.89         & -55.21                       & 5.03               \\
                                         & GGF             & 37.12(0.2)           & \textbf{37.88(0.2)}   & 26.65(0.2)           & 11.87          & 0                            & 22.17              \\
                                         & Maximin         & 16.99(0.2)           & \textbf{18.13(0.2)}   & 13.13(0.2)           & 0              & 0                            & 4.9                \\
                                         & System Utility  & 88.43(0.2)           & 88.63(0.2)            & 61.68(0.2)           & \textbf{94.88} & 0                            & 80.57              \\
                                         & Variance        & -25.14(0.2)          & -13.38(0.2)           & -4.59(0.2)           & -1683.49       & \textbf{0}                   & -242.3             \\
\hline
\multirow{5}{*}{Plant}                   & Alpha Fair      & \textbf{15(0.8)}     & 14.77(0.8)            & 14.54(0.8)           & -35.48         & -20.75                       & -20.62             \\
                                         & GGF             & \textbf{35.67(0.8)}  & 34.63(0.8)            & 33.45(0.8)           & 1.4            & 10.28                        & 10.89              \\
                                         & Maximin         & \textbf{16.61(0.8)}  & 15.98(0.8)            & 15.74(0.8)           & 0              & 3.62                         & 4.08               \\
                                         & System Utility  & \textbf{101.72(0.8)} & 99.38(0.8)            & 94.89(0.8)           & 13.31          & 42.57                        & 43.84              \\
                                         & Variance        & -6.34(0.8)           & -6.75(0.8)            & \textbf{-4.99(0.8)}  & -41.64         & -54.29                       & -49.91             \\
\hline
\multirow{5}{*}{Matthew}                 & Alpha Fair      & 20.03(0.2)           & \textbf{23.4(0.5)}    & 20.71(0.5)           & -29.04         & 12.45                        & 12.64              \\
                                         & GGF             & 14.41(0.2)           & \textbf{19(0.5)}      & 11.92(0.5)           & 1.73           & 4.6                          & 4.67               \\
                                         & Maximin         & 3.64(0.2)            & \textbf{8.86(0.5)}    & 5.09(0.5)            & 0.36           & 1.66                         & 1.69               \\
                                         & System Utility  & \textbf{140(0.2)}    & 108.1(0.5)            & 85.5(0.5)            & 47.39          & 42.77                        & 43.02              \\
                                         & Variance        & -26.95(0.2)          & \textbf{-1.28(0.5)}   & -5.17(0.5)           & -45.9          & -3.33                        & -3.32      \\       
\hline
\end{tabular}
}
\end{table*}
\section{Results}

Figure \ref{fig:main_results} shows the performance of all three DECAF methods (JO, SO, FO) on the five domains discussed above. For each method, we varied the hyperparameter $\beta$ controlling the fairness-utility trade-off (Eqs.~\ref{eq:JO}, \ref{eq:SO}, \ref{eq:FO}), starting with $\beta=0$ (top-left) and increasing to $\beta=1$ (bottom-right).  As mentioned in Section \ref{sec:fairness_in_decas}, we present results where we optimize for variance as the fairness function here. Additional results on learning with different fairness functions are included in the supplement.

\subsection{Efficacy of the Fairness-Utility Optimization}
For all domains, all three methods are able to learn expressive policies which lie at various points close to the Pareto front. This shows that the optimization allows the model to trade off utility and fairness to show diverse behaviors as required by the user. This also confirms that the fairness reward proposed for minimizing variance is a good signal.

\subsection{Comparison Against Baselines}

As seen in Figure~\ref{fig:main_results}, our methods Pareto-dominate both FEN and SOTO in all experiments, with SOTO\_Mask with GGF being the most competitive. 
Since the baselines were not trained on variance, we also compare the performance of DECAF on other metrics of interest.
For a more granular comparison, we select one $\beta$ value for JO, SO, and FO each, and compare it to the other methods across a variety of metrics. 
Table~\ref{tab:BaselineResults} shows the results for all domains, where we see that our methods provide better results all across the board.
Between the masked and ILP versions of the baselines, the masked versions perform better. This is to be expected, as using the ILP to select the best actions results in trajectories that are not on-policy for each agent, which breaks the requirements for policy gradient methods. The masked approach, on the other hand, cannot benefit from the centralized decision-making, and the random order of agents can lead to suboptimal behavior and missed opportunities. Further, it is difficult to extend policy gradient methods to variable or combinatorial action spaces, while our approach allows for arbitrary action spaces, as long as reasonable post-decision states can be approximated. Thus, in the DECA setting, Q-learning based methods like DECAF have the upper edge.

\begin{figure}[t]
    \centering
    \subfloat[System Utility]{\includegraphics[width=0.475\linewidth]{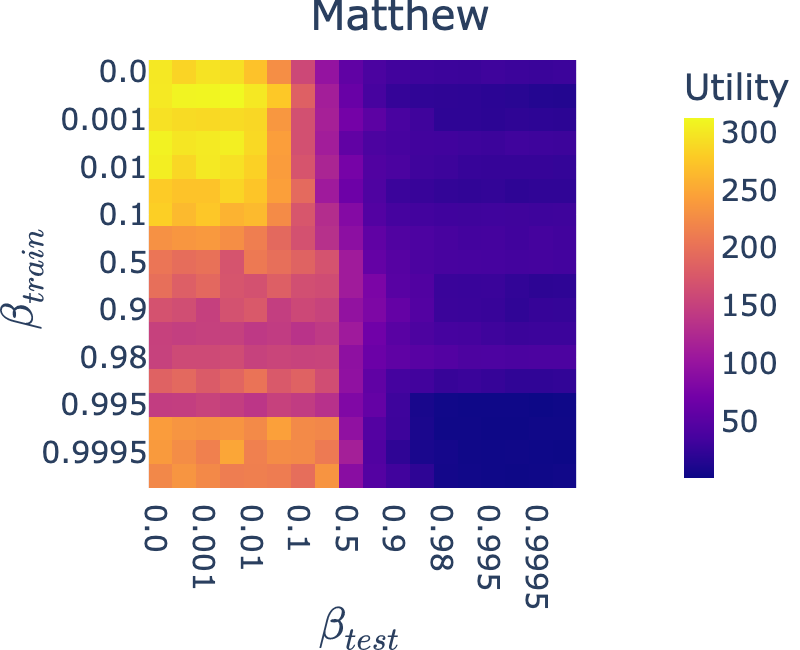}
    \label{fig:matthew-a}}
    \subfloat[Variance]{\includegraphics[width=0.475\linewidth]{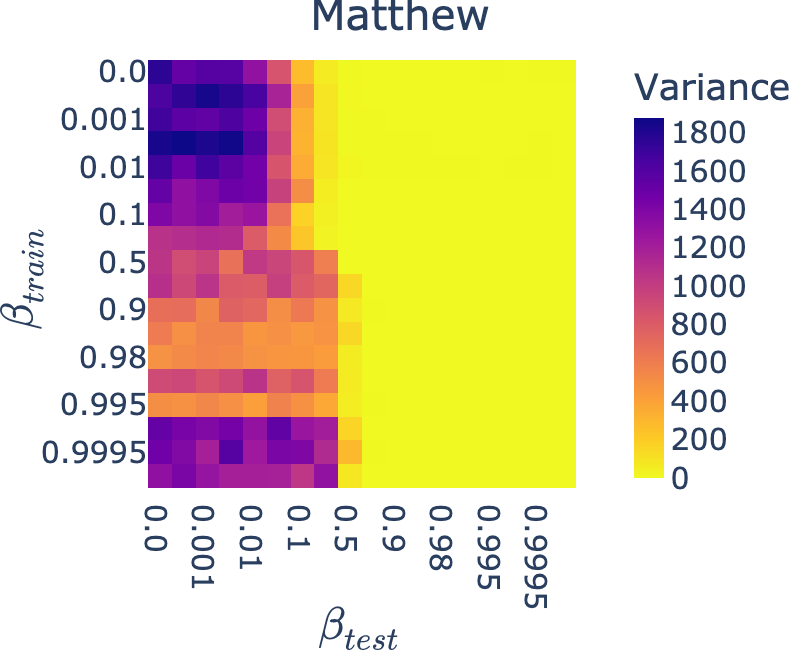}
    \label{fig:matthew-b}}
    \caption{Evaluation of SO models trained on $\beta_{train}$ and evaluated on $\beta_{test}$ for the Matthew environment. Brighter colors indicate better outcomes.}
    \label{fig:matthew}
\end{figure}

\begin{figure}[t]
    \centering
    \subfloat[System Utility]{    
    \includegraphics[width=0.475\linewidth]{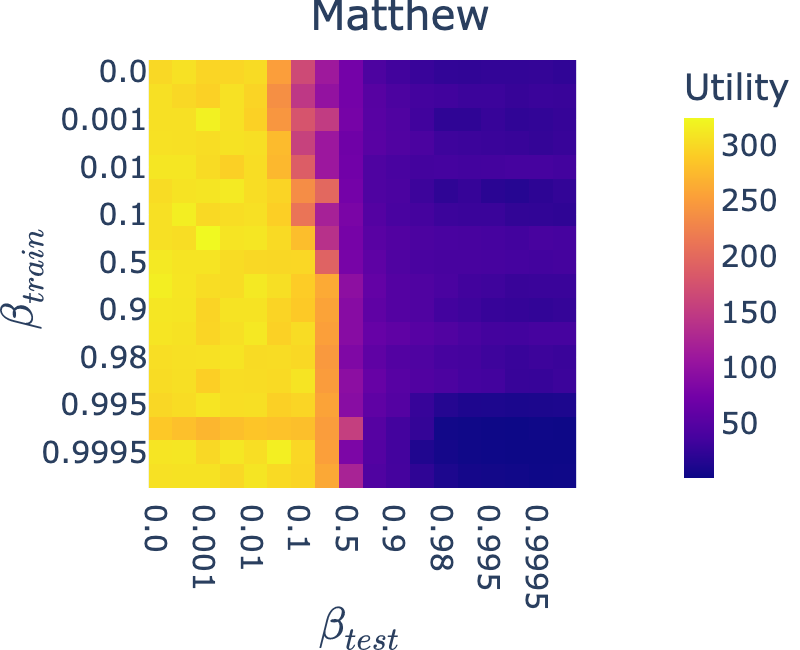}
    \label{fig:matthew_FO-a}}
    \subfloat[Variance]{    \includegraphics[width=0.475\linewidth]{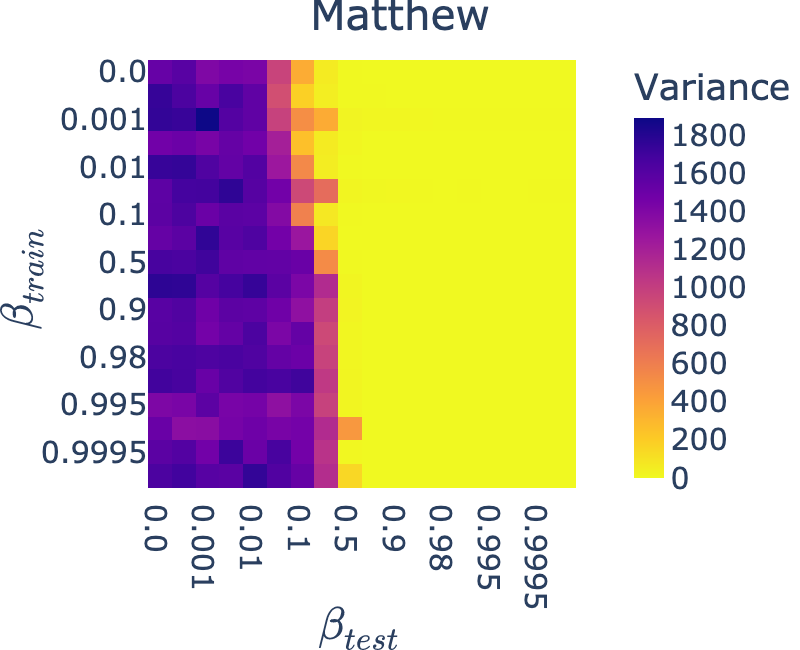}
    \label{fig:matthew_FO-b}}
    \caption{Evaluation of FO models trained on $\beta_{train}$ and evaluated on $\beta_{test}$ for the Matthew environment. Brighter colors indicate better outcomes.}
    \label{fig:matthew_FO}
\end{figure}
\begin{figure}[t]
    \centering
    \subfloat[Generalization of SO]{
    \includegraphics[width=0.45\linewidth]{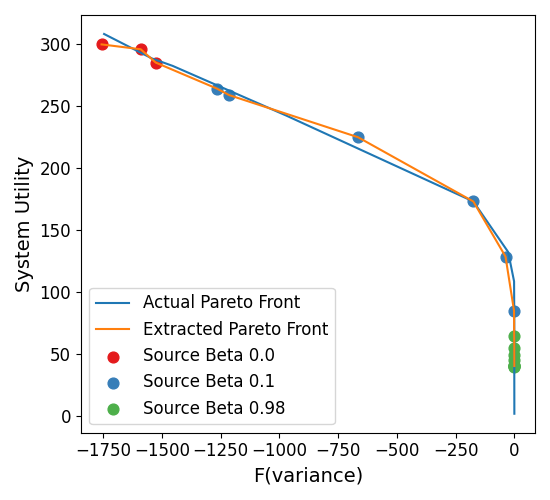}
    \label{fig:matthew-apprx-Pareto-SO}}
    \subfloat[Generalization of FO]{    \includegraphics[width=0.45\linewidth]{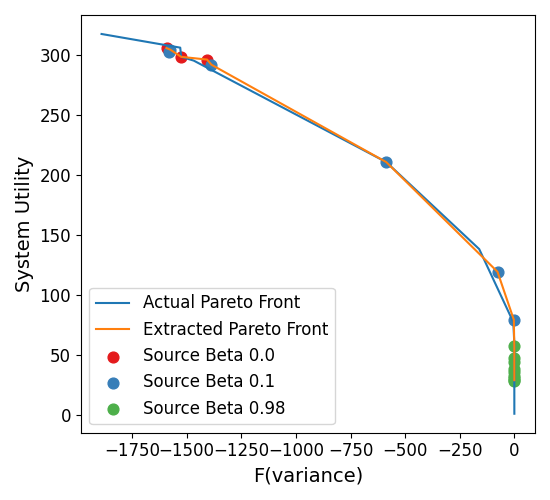}
    \label{fig:matthew-apprx-Pareto-FO}}
    \caption{Approximated Pareto fronts using sparse $\beta_{train}$ evaluated on other $\beta$ values for the Matthew domain.}
    \label{fig:matthew_apprx_Pareto}
\end{figure}

\subsection{Comparison of DECAF Methods}
In our results, JO and SO generally exhibit similar performance characteristics, suggesting that simultaneous evolution of utility and fairness estimates is beneficial. FO is also very competetive, but in some cases, it falls below the Pareto front.
This underperformance is likely due to out-of-distribution transitions for the fixed utility model, which are more problematic in FO when a large fairness weight $\beta$ is used, causing a larger shift in the state distribution and resulting in degraded utility estimates. We expect this to be a bigger issue in more complicated environments, or with poor black-box utility functions.

\subsection{Generalization Using Split Optimization (SO)}
Figure~\ref{fig:matthew} shows detailed results for the Matthew environment, when using SO. We evaluate each model trained on a particular $\beta_{train}$ on all other $\beta_{test}$. This allows us to see how well the trained fairness and utility models generalize when the operating $\beta$ is changed. Note that for all these models, $\beta$ is not provided as a feature to the Q-function. 

The diagonal elements show the behavior when training and testing is done on the same $\beta$ value.  
From the plots for system utility (Figure \ref{fig:matthew-a}) and variance (Figure \ref{fig:matthew-b}), we can see that as $\beta_{test}$ increases, variance improves, and as $\beta_{test}$ decreases, utility improves. This is the expected behavior, and the major advantage of SO over JO. With JO, the model only predicts a single value, so we are unable to change the trade-off weight during evaluation, and we require a unique model for each $\beta$ that we want the model to work for. However, with SO, selecting just a few spread out $\beta$ values can allow us to extrapolate between them, providing online adaptability.
This shows that SO has the flexibility to function well at operating points away from the $\beta_{train}$ that it is trained for.

Figure~\ref{fig:matthew-apprx-Pareto-SO} shows how well a few selected models can generalize to the Pareto front. We pick $\beta_{train}$ values evenly spaced across the search space, and evaluate the model on all $\beta_{test}$, picking the closest $\beta_{train}$ in order of the search space. We can see that the approximated Pareto front closely matches the actual Pareto front, even with just 3 models, further demonstrating the strength of SO.
These observations hold for other domains as well, and the results are included in the Supplement.

\subsection{Effectiveness of Fair-Only Optimization (FO)}
Like SO, FO is also able to generalize well when different $\beta_{test}$ are used to evaluate the learned models (Figure~\ref{fig:matthew_FO}). Because the utility model is fixed, all models achieve high utility as $\beta_{test}\xrightarrow{}0$ (Figure~\ref{fig:matthew_FO-a}). Further, all models also improve fairness as $\beta_{test}$ grows larger (Figure~\ref{fig:matthew_FO-b}). The behavior change from utility-oriented to fairness-oriented is much sharper in FO when compared to SO. 
Looking at Figure~\ref{fig:matthew-apprx-Pareto-FO}, we can again see that even FO has the ability to generalize from only a few models to cover the entire Pareto front. 
Despite being Pareto-dominated by SO and JO at intermediate $\beta$ values in some domains, FO has the advantage of reliability: A trusted black-box utility model can be used in conjunction with a possibly smaller fairness model, with the guarantee to behave optimally as $\beta_{test}$ is reduced. When such a model is available, FO is the best choice, given its competent performance and lower computational load.

\section{Conclusions and Future Work}
We proposed DECAF, a framework for learning long-term utility and fairness estimates in multi-agent resource allocation. DECAF is among the first approaches to optimize fair resource allocation under resource constraints, supporting diverse problem settings by decoupling fairness and utility metrics. Split and Fair-Only optimization enable online trade-offs between utility and fairness without retraining, enhancing interpretability. Our results demonstrate the flexibility and effectiveness of our approaches across various scenarios.
 
Our framework currently relies on Q-Learning, as deriving a policy gradient approach for DECA problems is challenging due to the dynamic state-action space and the indirect relationship between agent `policies' and actions resulting from ILP optimization. Addressing this challenge is a promising direction for future research.
Finally, our methods are not the only way to decompose the fairness reward across agents. Techniques like VDN~\citep{sunehag2017VDN} or QMIX~\citep{rashid2020QMIX} could be integrated with our framework to learn credit assignment for fair rewards.

\bibliography{main}
\bibliographystyle{icml2025}

\newpage
\appendix



\section{Theoretical Results}
\label{sec:appendixTheoretical}

For this section, we use an alternate notation, replacing $\beta$ with $\eta= \frac{\beta}{1-\beta_o}$ to make the equations easier to read, such that:
\begin{align*}
    (1-\beta)U + \beta F \Leftrightarrow U + \eta F
\end{align*}

This does not affect the allocation made by the ILP, as it only scales all Q-values by $1/(1-\beta)$. This would only be undefined when $\beta=1$, but we avoid that condition in our proofs. As $\beta\rightarrow1, \eta\rightarrow\infty$, and for any $\beta'>\beta$, $\eta'>\eta$. Note that in the theorem statements, we replace $\beta$ with $\eta$, but the proofs are equivalent.

The following results hold for any fairness function used in the DECAF formulation.

\begin{proposition}
As $\eta_{test}\xrightarrow{} 0$, all fair-only models behave in a utility-maximizing manner.
\end{proposition}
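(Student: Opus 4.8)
The plan is to reduce the claim to a statement about the ILP of Eq.~\ref{eq:Opt_DECAF} and then exploit the finiteness of its feasible region. Using the $\eta$-reparametrization introduced above (so that the FO rule scores allocations by $U^* + \eta F_\theta$, which leaves the ILP's constraint set untouched), the FO decision at a state with joint observation $\mathbf{o}$ selects
\begin{align*}
\mathcal{A}(\eta) \in \argmax_{\mathcal{A} \in \mathcal{X}(\mathbf{o})} \; \sum_{i \in \alpha} \bigl( U^*(o_i, \mathcal{A}_i) + \eta\, F_\theta(o_i, \mathcal{A}_i) \bigr),
\end{align*}
where $\mathcal{X}(\mathbf{o})$ is the set of allocations satisfying Eqs.~\ref{eq:action_constraint}--\ref{eq:resource_constraint}. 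The key structural fact is that $\mathcal{X}(\mathbf{o})$ is a \emph{finite} set that does not depend on $\eta$. Writing $U(\mathcal{A}) = \sum_i U^*(o_i,\mathcal{A}_i)$ and $F(\mathcal{A}) = \sum_i F_\theta(o_i,\mathcal{A}_i)$, let $\mathcal{X}^*(\mathbf{o}) = \argmax_{\mathcal{A} \in \mathcal{X}(\mathbf{o})} U(\mathcal{A})$ be the set of utility-maximizing allocations — exactly what a pure-utility DECA controller using $U^*$ as its Q-estimator is permitted to pick.

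Next I would run the gap argument, in the same spirit as the proof sketches of Theorems~\ref{th:theorem_fair} and~\ref{th:theorem_fairest} (compare ILP objective values). Since $\mathcal{X}(\mathbf{o})$ is finite, $\Delta(\mathbf{o}) = \min\{\, U(\mathcal{A}^*) - U(\mathcal{A}) : \mathcal{A}^* \in \mathcal{X}^*(\mathbf{o}),\ \mathcal{A} \in \mathcal{X}(\mathbf{o}) \setminus \mathcal{X}^*(\mathbf{o}) \,\} > 0$ (if that index set is empty, every feasible allocation is already utility-maximizing and there is nothing to prove). Because $F_\theta$ is a fixed trained network evaluated on a bounded input set, $|F(\mathcal{A})| \le M(\mathbf{o}) < \infty$ for all $\mathcal{A} \in \mathcal{X}(\mathbf{o})$. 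Hence for every $\mathcal{A}^* \in \mathcal{X}^*(\mathbf{o})$ and $\mathcal{A} \notin \mathcal{X}^*(\mathbf{o})$,
\begin{align*}
\bigl(U(\mathcal{A}^*) + \eta F(\mathcal{A}^*)\bigr) - \bigl(U(\mathcal{A}) + \eta F(\mathcal{A})\bigr) \ \ge\ \Delta(\mathbf{o}) - 2\eta M(\mathbf{o}) \ >\ 0
\end{align*}
as soon as $\eta < \Delta(\mathbf{o}) / (2 M(\mathbf{o}))$. So for all sufficiently small $\eta$ the ILP returns an allocation in $\mathcal{X}^*(\mathbf{o})$, i.e. one that maximizes the utility estimate.

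Finally I would close the argument. The bound above is pointwise in $\mathbf{o}$, which already suffices for the limiting statement: as $\eta_{test} \to 0$, at each decision step the FO controller eventually selects a utility-maximizing allocation; and since nothing used any property of $F_\theta$ beyond boundedness of its outputs, this holds for every fair-only model — hence "all." (If the set of reachable observations is finite, as in the discrete environments, or the utility gaps are uniformly bounded below, a single threshold $\eta_0$ works globally.) The only genuine subtlety I anticipate is handling ties inside $\mathcal{X}^*(\mathbf{o})$: among utility-maximizing allocations the ILP just breaks the tie toward the largest $F$, which is still utility-maximizing, so one should either read "utility-maximizing" at the level of the optimal utility \emph{value}, or fix a shared deterministic tie-break rule to match a specific pure-utility policy exactly. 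Beyond that, the reasoning is the same elementary objective-comparison used elsewhere in the appendix.
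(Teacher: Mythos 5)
Your proposal is correct, but it does substantially more than the paper does: the paper states this proposition without proof, offering only the one-line observation that at $\eta=0$ the fairness estimator simply drops out of the ILP objective, so the allocation is determined by $U^*$ alone. Your argument instead treats the actual limiting statement ``$\eta_{test}\to 0$'' rather than just the endpoint $\eta=0$: because the feasible set of the ILP is finite and independent of $\eta$, the utility gap $\Delta(\mathbf{o})$ between utility-optimal and non-optimal allocations is strictly positive, the fairness scores over that finite set are bounded, and hence for every $\eta$ below the threshold $\Delta(\mathbf{o})/(2M(\mathbf{o}))$ the ILP already returns a utility-maximizing allocation (with ties broken toward larger fairness, which you correctly note does not leave the utility-optimal set). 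This buys a genuinely stronger conclusion than the paper's remark --- utility-maximizing behavior for all sufficiently small positive $\eta$, uniformly over observations when the observation set is finite or the gaps are uniformly bounded below --- and it is in the same objective-comparison style as the appendix proofs of the two $\beta$-monotonicity theorems. The only caveat, which you already flag yourself, is that ``utility-maximizing'' should be read at the level of the achieved utility value (or with a fixed tie-break) rather than as coincidence with one particular pure-utility policy; with that reading your argument is complete and, if anything, a useful strengthening of what the paper asserts.
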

We state this without proof. It is easy to follow how this holds, as at $\eta=0$, the fairness model does not play any role in the decision making.

\setcounter{theorem}{0}
\begin{theorem}
Given perfect estimates for utility and fairness, increasing $\eta$ always improves the one-step fairness gain for SO with $\gamma=0$.
\label{th:theorem_fair_app}
\end{theorem}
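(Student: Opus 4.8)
The plan is to reduce the statement to a textbook monotone-comparative-statics argument about the ILP in \eqref{eq:Opt_DECAF}. The first step is to pin down what $\gamma = 0$ and ``perfect estimates'' mean concretely. Setting $\gamma = 0$ in the SO Bellman errors of \eqref{eq:SO} and assuming the estimators sit at their exact fixed points gives $U_\theta(\mathbf{o},\mathcal{A}) = \mathbf{r}_u$ and $F_\theta(\mathbf{o},\mathcal{A}) = \mathbf{r}_f$, so the Q-vector handed to the ILP is exactly $\mathbf{r}_u + \eta\,\mathbf{r}_f$. Since every fairness-reward decomposition used in the paper has $\sum_{i\in\alpha}[\mathbf{r}_f]_i = \Delta\mathcal{F}\mid\mathcal{A}$ (the even split and the variance split of \eqref{eq:var_fair_reward} both sum to the total change $\mathbb{F}(\textbf{Z}_{t+1})-\mathbb{F}(\textbf{Z}_t)$), the ILP objective for any feasible allocation $\mathcal{A}$ collapses to
\begin{align*}
g_\eta(\mathcal{A}) \;=\; U(\mathcal{A}) \;+\; \eta\,\bigl(\Delta\mathcal{F}\mid\mathcal{A}\bigr),
\end{align*}
where $U(\mathcal{A}) = \sum_{i\in\alpha}[\mathbf{r}_u]_i$ is the one-step utility and $\Delta\mathcal{F}\mid\mathcal{A}$ is exactly the one-step fairness gain in the statement. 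Crucially, the feasible set carved out by \eqref{eq:action_constraint}--\eqref{eq:resource_constraint} does not involve $\eta$ at all.

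The second step is the interchange argument. Fix $\eta' > \eta \ge 0$, and let $\mathcal{A}$ and $\mathcal{A}'$ be allocations the ILP returns at $\eta$ and $\eta'$ respectively, so that $g_\eta(\mathcal{A}) \ge g_\eta(\mathcal{A}')$ and $g_{\eta'}(\mathcal{A}') \ge g_{\eta'}(\mathcal{A})$ by optimality. Adding the two inequalities cancels the $U(\cdot)$ terms and leaves $(\eta' - \eta)\,(\Delta\mathcal{F}\mid\mathcal{A}') \ge (\eta' - \eta)\,(\Delta\mathcal{F}\mid\mathcal{A})$; dividing by $\eta' - \eta > 0$ shows the one-step fairness gain is non-decreasing in $\eta$. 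For the stronger ``always improves'' reading I would add the remark that if raising $\eta$ actually changes the ILP's choice and $\mathcal{A}'$ is the unique maximizer of $g_{\eta'}$, then the second inequality is strict and the chain propagates strictness, giving $\Delta\mathcal{F}\mid\mathcal{A}' > \Delta\mathcal{F}\mid\mathcal{A}$; otherwise the allocation, and hence the gain, is unchanged. (The same two inequalities also yield $U(\mathcal{A}) \ge U(\mathcal{A}')$, which is precisely the corollary that utility is non-decreasing as $\eta$ shrinks.)

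I do not anticipate a real obstacle here: this is the standard ``add two $\argmax$ inequalities and cancel'' trick, and the only arithmetic involved is the cancellation above. The one point that genuinely needs care is the reduction in the first paragraph --- ``perfect estimates'' has to be read as the hypothesis that, for every feasible allocation, the value the ILP assembles from the per-agent heads coincides with the realized joint quantities $U(\mathcal{A})$ and $\Delta\mathcal{F}\mid\mathcal{A}$, even though each $F_\theta(o_i,a)$ is nominally a function of the local observation only (the paper's own observation that an agent ``only needs to be communicated information about the average utility'' is what makes this assumption reasonable). I would state that as an explicit standing hypothesis rather than attempt to derive it, after which the rest of the proof is routine.
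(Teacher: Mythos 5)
Your proposal is correct and follows essentially the same argument as the paper: both reduce the $\gamma=0$, perfect-estimate setting to comparing the $\eta$-independent-feasible-set ILP objective $U_{tot}(\mathcal{A}) + \eta F_{tot}(\mathcal{A})$ at two weights, and both derive $(\eta'-\eta)\bigl(F_{tot}(\mathcal{A}') - F_{tot}(\mathcal{A})\bigr) \ge 0$ from the two optimality conditions (you add them, the paper chains them through the utility difference, which is algebraically the same step). Your explicit handling of strictness and of what ``perfect estimates'' must mean for the per-agent heads is a slight tightening of what the paper leaves implicit, but the route is the same.
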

\begin{proof}
We assume that the utility and fairness estimators are converged, i.e., the estimates of fairness and utility are correct. For the following discussion, assume the environment has evolved over some time $t$ and is at a state $s_t$. We consider what changes when we change $\eta$ at this state. Variables used henceforth are conditioned on $s_t$, wherever reasonable. We make the conditioning on $s_t$ implicit and do not notate it, to make it easier to read.

With $\gamma=0$, the optimal utility and fairness estimates equal the one-step return, i.e. the change in utility and fairness because of the resulting joint action. Note that these values are not known to the agents prior to the allocation as they depend on the joint actions of all agents, so computing these estimates is not trivial.

Let $U_{tot}(\mathcal{A})$ and $F_{tot}(\mathcal{A})$ be defined as follows, given an allocation $\mathcal{A}$:
\begin{align}
    U_{tot}(\mathcal{A}) &= \sum_{i\in\alpha}U(\mathcal{A}_i)\\
    F_{tot}(\mathcal{A}) &= \sum_{i\in\alpha}F(\mathcal{A}_i)
\end{align}
We remind the reader that $\mathcal{A}_i$ refers to the action assigned to agent $i$ in the allocation $\mathcal{A}$.

Let $\textbf{Z}_t$ represent the agent metrics at time $t$. Further, let $\mathcal{A}^*$ represent the optimal allocation from the ILP with $\eta$ as the trade-off weight. Since $\mathcal{A}^*$ is optimal, it follows that for all other possible allocations $\mathcal{A}_o$: 
\begin{align}
    U_{tot}(\mathcal{A}^*) + \eta F_{tot}(\mathcal{A}^*) &\ge U_{tot}(\mathcal{A}_o) + \eta F_{tot}(\mathcal{A}_o)  \\
    U_{tot}(\mathcal{A}^*) - U_{tot}(\mathcal{A}_o)&\ge  \eta (F_{tot}(\mathcal{A}_o) - F_{tot}(\mathcal{A}^*)) \label{eq:base_comp} 
\end{align}

We are interested in finding what happens to the allocation when we increase $\eta$.
For $\eta'>\eta$, note that the left side of Eq.~\ref{eq:base_comp} remains the same. Since utility estimates are not affected by changing $\eta$, any other allocation $\mathcal{A}_o$ can only be selected over $\mathcal{A}^*$ if the following condition holds:
\begin{align}
    U_{tot}(\mathcal{A}^*) + \eta' F_{tot}(\mathcal{A}^*) &\le U_{tot}(\mathcal{A}_o) + \eta' F_{tot}(\mathcal{A}_o)  \\
    U_{tot}(\mathcal{A}^*) - U_{tot}(\mathcal{A}_o)&\le  \eta' (F_{tot}(\mathcal{A}_o) - F_{tot}(\mathcal{A}^*)) \label{eq:base_comp2} 
\end{align}

Combining Eqs.\ref{eq:base_comp} and \ref{eq:base_comp2}, we get the following:
\begin{align}
    \eta (F_{tot}(\mathcal{A}_o) - F_{tot}(\mathcal{A}^*)) &\le  \eta' (F_{tot}(\mathcal{A}_o) - F_{tot}(\mathcal{A}^*)) \\
    F_{tot}(\mathcal{A}^*)(\eta'-\eta) &\le F_{tot}(\mathcal{A}_o)(\eta'-\eta) \label{eq:base_comp3} 
\end{align}
Since $\eta\ge0$ and $\eta'>\eta$, Eq.~\ref{eq:base_comp3} can only be true if $F_{tot}(\mathcal{A}_o)>F_{tot}(\mathcal{A}_o)$.

Thus, any allocation $\mathcal{A}_o$ that is optimal (and thus selected by the ILP) for $\eta'>\eta$ is guaranteed to have equal or better fairness than the allocation $\mathcal{A}^*$ at $\eta$.
\end{proof}

We also state the corollary to Theorem~\ref{th:theorem_fair_app}.
\begin{corollary}
Given perfect estimates for utility and fairness, decreasing $\eta$ always improves the one-step utility gain for SO with $\gamma=0$.
\label{th:theorem_util_app}
\end{corollary}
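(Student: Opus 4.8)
The plan is to mirror the proof of Theorem~\ref{th:theorem_fair_app} with the roles of utility and fairness interchanged. As there, I would fix a state $s_t$, assume the estimators have converged so that with $\gamma=0$ the estimates $U(\mathcal{A}_i)$ and $F(\mathcal{A}_i)$ equal the realized one-step returns, and work with the aggregates $U_{tot}(\mathcal{A})=\sum_{i\in\alpha}U(\mathcal{A}_i)$ and $F_{tot}(\mathcal{A})=\sum_{i\in\alpha}F(\mathcal{A}_i)$. Let $\mathcal{A}^*$ be the ILP-optimal allocation at weight $\eta$, so that $U_{tot}(\mathcal{A}^*)+\eta F_{tot}(\mathcal{A}^*) \ge U_{tot}(\mathcal{A}_o)+\eta F_{tot}(\mathcal{A}_o)$ for every feasible $\mathcal{A}_o$; isolating gives $U_{tot}(\mathcal{A}^*)-U_{tot}(\mathcal{A}_o) \ge \eta (F_{tot}(\mathcal{A}_o)-F_{tot}(\mathcal{A}^*))$, the analogue of Eq.~\ref{eq:base_comp}.

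Next I would take $\eta'' < \eta$ and let $\mathcal{A}_o$ be any allocation that becomes (weakly) optimal at $\eta''$, i.e.\ $U_{tot}(\mathcal{A}_o)+\eta'' F_{tot}(\mathcal{A}_o) \ge U_{tot}(\mathcal{A}^*)+\eta'' F_{tot}(\mathcal{A}^*)$. Rearranging both inequalities to isolate $U_{tot}(\mathcal{A}_o)-U_{tot}(\mathcal{A}^*)$ sandwiches it as $\eta''(F_{tot}(\mathcal{A}^*)-F_{tot}(\mathcal{A}_o)) \le U_{tot}(\mathcal{A}_o)-U_{tot}(\mathcal{A}^*) \le \eta(F_{tot}(\mathcal{A}^*)-F_{tot}(\mathcal{A}_o))$. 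Comparing the two ends gives $(F_{tot}(\mathcal{A}^*)-F_{tot}(\mathcal{A}_o))(\eta''-\eta) \le 0$, and since $\eta''-\eta<0$ this forces $F_{tot}(\mathcal{A}^*) \ge F_{tot}(\mathcal{A}_o)$. Substituting this back into the lower bound and using $\eta'' \ge 0$ yields $U_{tot}(\mathcal{A}_o)-U_{tot}(\mathcal{A}^*) \ge \eta''(F_{tot}(\mathcal{A}^*)-F_{tot}(\mathcal{A}_o)) \ge 0$, i.e.\ the allocation selected at the smaller weight has one-step utility gain no smaller than the one selected at $\eta$. The boundary $\eta=1 \Leftrightarrow \beta=1$ is excluded throughout, exactly as in the appendix, and the case $\eta''=0$ needs no separate treatment since the bound still goes through (alternatively, the ILP then maximizes $U_{tot}$ exactly).

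A shorter route, worth stating as a remark, is that for $\eta>0$ the ILP objective $U_{tot}+\eta F_{tot}$ has the same maximizers as $\tfrac{1}{\eta}U_{tot}+F_{tot}$; writing $\lambda=1/\eta$, decreasing $\eta$ is increasing $\lambda$, so Theorem~\ref{th:theorem_fair_app} applied verbatim with $U_{tot}$ and $F_{tot}$ swapped and $\lambda$ in place of $\eta$ gives the claim immediately. I would present the direct argument for self-containedness and relegate the rescaling to a remark. I do not expect a genuine obstacle, since the statement is dual to Theorem~\ref{th:theorem_fair_app}; the only point needing care is that, unlike that theorem where the combined inequality already concerns the quantity of interest, here the argument is two-staged — one must first deduce that the fairness aggregate did not rise and only then conclude the utility gain did not fall — so I would be careful that the sign of $\eta''-\eta$ and the nonnegativity of $\eta''$ are each used in the right place. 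As in the original proof, I would also flag that $U_{tot}$ and $F_{tot}$ are post-allocation quantities, so the corollary describes the realized behavior of the ILP rather than something the agents can anticipate before allocation.
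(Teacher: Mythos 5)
Your proposal is correct and matches what the paper intends: the paper gives no explicit proof of this corollary, merely asserting it ``follows a similar structure'' to Theorem~\ref{th:theorem_fair_app}, and your argument is exactly that mirrored derivation carried out properly. Your observation that the mirroring is not purely symmetric --- one must first conclude $F_{tot}(\mathcal{A}^*)\ge F_{tot}(\mathcal{A}_o)$ from the sign of $\eta''-\eta$ and only then invoke $\eta''\ge 0$ to bound the utility gain (or else rescale by $1/\eta$ as in your remark) --- is a valid and worthwhile refinement of the paper's one-line claim.
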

The proof follows a similar structure to Theorem~\ref{th:theorem_fair_app}.

While we only prove the behavior for $\gamma=0$, our empirical results show that we can expect similar behavior for long-horizon estimates. For any state, we will select actions that improve fairness in the long run starting from that state as $\eta$ is increased.

We also show the following useful property:
\begin{theorem}
    For a large enough $\eta$, the fairest allocation will be selected with perfect utility and fairness estimators for SO with $\gamma=0$.
\end{theorem}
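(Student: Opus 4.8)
The plan is to push the ILP-optimality argument of Theorem~\ref{th:theorem_fair_app} to its large-$\eta$ limit. First I would fix a state $s_t$ (as in that proof, all quantities are implicitly conditioned on $s_t$) and recall that with $\gamma=0$ and converged estimators, $F_{tot}(\mathcal{A})$ equals the exact one-step fairness change $\Delta\mathcal{F}\mid\mathcal{A}=\mathbb{F}(\mathbf{Z}_{t+1})-\mathbb{F}(\mathbf{Z}_t)$ produced by the joint action $\mathcal{A}$; this uses that the reward decomposition sums back to $\Delta\mathcal{F}\mid\mathcal{A}$. Since $\mathbb{F}(\mathbf{Z}_t)$ is constant at $s_t$, maximizing $F_{tot}$ over feasible allocations is the same as maximizing the resulting system fairness $\mathbb{F}(\mathbf{Z}_{t+1})$, so I will take ``the fairest allocation'' $\mathcal{A}_f$ to be any feasible allocation attaining this maximum and write $\mathcal{F}^{\star}=F_{tot}(\mathcal{A}_f)=\max_{\mathcal{A}}F_{tot}(\mathcal{A})$.

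Next I would invoke finiteness: $\alpha$ and each $A_i$ are finite, so the feasible-allocation set is finite and so is the set of achievable fairness values. Split the feasible allocations into those with $F_{tot}(\mathcal{A})=\mathcal{F}^{\star}$ and the rest, which have $F_{tot}(\mathcal{A})<\mathcal{F}^{\star}$. For any $\mathcal{A}_o$ in the second group, the ILP with weight $\eta$ prefers $\mathcal{A}_f$ over $\mathcal{A}_o$ exactly when $U_{tot}(\mathcal{A}_f)+\eta\,\mathcal{F}^{\star}\ge U_{tot}(\mathcal{A}_o)+\eta\,F_{tot}(\mathcal{A}_o)$, i.e. when
\[
\eta \;\ge\; \frac{U_{tot}(\mathcal{A}_o)-U_{tot}(\mathcal{A}_f)}{\mathcal{F}^{\star}-F_{tot}(\mathcal{A}_o)},
\]
where the denominator is strictly positive by construction. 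Then I would define $\eta_f$ as the maximum of $0$ and these finitely many ratios over all $\mathcal{A}_o$ with $F_{tot}(\mathcal{A}_o)<\mathcal{F}^{\star}$; this maximum is attained and finite.

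To close the argument, I would note that for any $\eta>\eta_f$ every strictly-less-fair allocation has a strictly smaller ILP objective than $\mathcal{A}_f$, so the allocation returned by the ILP must attain $F_{tot}=\mathcal{F}^{\star}$ and is therefore a fairest allocation. If a unique selection is desired, ties within the $\mathcal{F}^{\star}$-group are already broken in favor of higher utility by the objective $U_{tot}+\eta F_{tot}$ for any finite $\eta$. I would also remark that Theorem~\ref{th:theorem_fair_app} guarantees monotonicity on the way up to $\eta_f$, so there is no oscillation: raising $\eta$ only ever moves toward fairer allocations and, past $\eta_f$, lands on the fairest one. Finally I would state the corollary (a symmetric argument with $\eta\to 0$ recovering a utility-maximizing allocation, already noted as the Proposition).

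The main obstacle I anticipate is not the algebra --- which is a one-line rearrangement --- but stating the setup cleanly: making explicit that $\gamma=0$ together with exact estimators collapses $F_{tot}$ to the true one-step fairness gain (so ``fairest'' is well-defined as an $\arg\max$ of $F_{tot}$), and invoking finiteness of the feasible-allocation set so that the supremum of the critical ratios is attained and finite. A secondary subtlety is that $\eta_f$ is state-dependent; the statement should be read as ``for each state there is a threshold,'' matching the conditioning convention used in Theorem~\ref{th:theorem_fair_app}.
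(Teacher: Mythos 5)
Your proposal is correct and takes essentially the same route as the paper: rearrange the ILP optimality comparison into a per-allocation threshold ratio $\frac{U_{tot}(\mathcal{A}_o)-U_{tot}(\mathcal{A}_f)}{F_{tot}(\mathcal{A}_f)-F_{tot}(\mathcal{A}_o)}$ and argue that beyond a finite $\eta$ the fairest allocation wins. The only (minor) difference is in how the threshold is packaged — the paper assumes distinct fairness values and bounds the ratio uniformly via $U_{max}$ and the second-best fairness $F_{max}$, whereas you handle ties explicitly and take a maximum over the finitely many ratios — but both yield the same finite, state-dependent cutoff.
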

\begin{proof}
Let $\mathcal{A}_f$ denote the allocation with the largest fairness gain:
\begin{align*}
    \mathcal{A}_f = \argmax_\mathcal{A} F_{tot}(\mathcal{A}) 
\end{align*}

For simplicity, let us assume no two allocations have the same $F_{tot}(\mathcal{A})$. For any other allocation $\mathcal{A}_o$, we have:
\begin{align}
    F_{tot}(\mathcal{A}_f) > F_{tot}(\mathcal{A}_o)  
\end{align}

Then, $\mathcal{A}_f$ will be optimal and selected by the ILP if the following condition holds:
\begin{align}
  U_{tot}(\mathcal{A}_f) + \eta_f F_{tot}(\mathcal{A}_f) &\ge U_{tot}(\mathcal{A}_o) + \eta_f F_{tot}(\mathcal{A}_o)  \\
  \eta_f &\ge \frac{U_{tot}(\mathcal{A}_o) - U_{tot}(\mathcal{A}_f)}{F_{tot}(\mathcal{A}_f)- F_{tot}(\mathcal{A}_o)} 
\end{align}

We can compute an upper bound for $\eta_f$ by considering the range of values that $U_{tot}$ and $F_{tot}$ can take. Let $U_{max}=\max_\mathcal{A}U_{tot}(\mathcal{A})$, and $F_{max}=\max_{\mathcal{A}, \mathcal{A}\ne \mathcal{A}_f }F_{tot}(\mathcal{A})$.

Then, we have the following:
\begin{align}
    \eta_f &\ge \frac{U_{tot}(\mathcal{A}_o) - U_{tot}(\mathcal{A}_f)}{F_{tot}(\mathcal{A}_f)- F_{tot}(\mathcal{A}_o)}\\
    &\le \frac{U_{max} - U_{tot}(\mathcal{A}_f)}{F_{tot}(\mathcal{A}_f)- F_{tot}(\mathcal{A}_o)}\\
    &\le \frac{U_{max} - U_{tot}(\mathcal{A}_f)}{F_{tot}(\mathcal{A}_f)- F_{max}} = \eta_f^u \label{eq:beta_upper}
\end{align}
Eq.~\ref{eq:beta_upper} gives us an upper bound for $\eta_f$. Thus, for all $\eta>\eta_f^u$, $\mathcal{A}_f$ will be the optimal allocation.
\end{proof}
\begin{corollary}
    For a small enough $\eta$, the most utilitarian allocation will be selected with perfect utility and fairness estimators for SO with $\gamma=0$.
\end{corollary}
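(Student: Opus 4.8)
The plan is to mirror the argument of the preceding theorem, swapping the roles of utility and fairness. First I would let $\mathcal{A}_u = \argmax_\mathcal{A} U_{tot}(\mathcal{A})$ denote the most utilitarian allocation, and, exactly as in that proof, assume for simplicity that the maximizer is unique, so that $U_{tot}(\mathcal{A}_u) > U_{tot}(\mathcal{A}_o)$ for every other feasible allocation $\mathcal{A}_o$. Since the ILP with trade-off weight $\eta$ selects $\mathcal{A}_u$ iff $U_{tot}(\mathcal{A}_u) + \eta F_{tot}(\mathcal{A}_u) \ge U_{tot}(\mathcal{A}_o) + \eta F_{tot}(\mathcal{A}_o)$ for all $\mathcal{A}_o$, the condition to establish is
\begin{align}
    U_{tot}(\mathcal{A}_u) - U_{tot}(\mathcal{A}_o) \ge \eta\,\bigl(F_{tot}(\mathcal{A}_o) - F_{tot}(\mathcal{A}_u)\bigr) \quad \text{for all } \mathcal{A}_o .
\end{align}

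Next I would split into two cases for each fixed competitor $\mathcal{A}_o$. If $F_{tot}(\mathcal{A}_o) \le F_{tot}(\mathcal{A}_u)$, the right-hand side is non-positive while the left-hand side is strictly positive, so the inequality holds for every $\eta \ge 0$. If instead $F_{tot}(\mathcal{A}_o) > F_{tot}(\mathcal{A}_u)$, the inequality is equivalent to $\eta \le \bigl(U_{tot}(\mathcal{A}_u) - U_{tot}(\mathcal{A}_o)\bigr) / \bigl(F_{tot}(\mathcal{A}_o) - F_{tot}(\mathcal{A}_u)\bigr)$, a strictly positive per-competitor threshold. As in Eq.~\ref{eq:beta_upper}, I would then take a uniform bound over all competitors: setting $U_{min} = \min_{\mathcal{A} \ne \mathcal{A}_u} U_{tot}(\mathcal{A})$ and $F_{max} = \max_\mathcal{A} F_{tot}(\mathcal{A})$, each such threshold is at least
\begin{align}
    \eta_u^l = \frac{U_{tot}(\mathcal{A}_u) - U_{min}}{F_{max} - F_{tot}(\mathcal{A}_u)} > 0 ,
\end{align}
so any $\eta$ with $0 \le \eta \le \eta_u^l$ makes $\mathcal{A}_u$ optimal, which is the claim. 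Finiteness of the action spaces $\{A_i\}$, hence of the set of feasible allocations, is what guarantees these extrema exist and that $\eta_u^l$ is genuinely attainable.

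The only wrinkle — the analogue of the degenerate case flagged in the preceding theorem — is when $\mathcal{A}_u$ is also (weakly) the fairest allocation, i.e. $F_{max} = F_{tot}(\mathcal{A}_u)$; then the denominator vanishes, but in that situation the first case above already covers every competitor, so $\mathcal{A}_u$ is selected for all $\eta \ge 0$ and the claim is immediate. I would also remark that dropping uniqueness only requires reading ``the most utilitarian allocation'' as an arbitrary element of $\argmax_\mathcal{A} U_{tot}(\mathcal{A})$ and restricting attention to competitors strictly worse in utility, leaving the argument otherwise intact. I do not anticipate any substantive obstacle, since this is structurally the dual of the theorem just proved; the one thing to be careful about is the sign bookkeeping in the case split, so that one never divides through by a non-positive denominator.
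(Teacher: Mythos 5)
Your overall plan is the right one and matches the paper's intent (the paper itself only says the corollary ``follows a similar structure'' to the theorem, and your case split on the sign of $F_{tot}(\mathcal{A}_o)-F_{tot}(\mathcal{A}_u)$ plus a per-competitor threshold is exactly that dual argument). However, your explicit uniform bound is wrong: to lower-bound the per-competitor threshold $T_o = \bigl(U_{tot}(\mathcal{A}_u)-U_{tot}(\mathcal{A}_o)\bigr)/\bigl(F_{tot}(\mathcal{A}_o)-F_{tot}(\mathcal{A}_u)\bigr)$ you must lower-bound its numerator, but replacing $U_{tot}(\mathcal{A}_o)$ by $U_{min}=\min_{\mathcal{A}\ne\mathcal{A}_u}U_{tot}(\mathcal{A})$ makes the numerator larger, so $\eta_u^l=\bigl(U_{tot}(\mathcal{A}_u)-U_{min}\bigr)/\bigl(F_{max}-F_{tot}(\mathcal{A}_u)\bigr)$ is not a lower bound on $T_o$, and the claim ``any $0\le\eta\le\eta_u^l$ makes $\mathcal{A}_u$ optimal'' fails. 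Concretely, with three allocations having $(U_{tot},F_{tot})$ equal to $(10,0)$ for $\mathcal{A}_u$, $(9.9,1)$, and $(0,0.5)$, your bound gives $\eta_u^l=10$, yet already at $\eta=0.2$ the ILP prefers the second allocation ($9.9+0.2\cdot 1 > 10$); the true threshold against that competitor is $0.1$.

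The fix is the exact mirror of Eq.~\ref{eq:beta_upper}: bound the numerator from below using the \emph{second-highest} utility, $U_{sec}=\max_{\mathcal{A}\ne\mathcal{A}_u}U_{tot}(\mathcal{A})$, which is strictly below $U_{tot}(\mathcal{A}_u)$ under your uniqueness assumption, and keep $F_{max}$ in the denominator. Then every competitor in your second case satisfies $T_o \ge \bigl(U_{tot}(\mathcal{A}_u)-U_{sec}\bigr)/\bigl(F_{max}-F_{tot}(\mathcal{A}_u)\bigr) > 0$, and any $\eta$ below this value (or simply below $\min_o T_o$, which exists and is positive because the set of feasible allocations is finite) makes $\mathcal{A}_u$ optimal. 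Your handling of the degenerate case $F_{max}=F_{tot}(\mathcal{A}_u)$ and of competitors with lower fairness is fine; only the choice of extremum in the numerator needs correcting.
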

The proof follows a similar structure to the proof for the previous theorem.

\section{Learning Algorithm}
\label{sec:appendixAlgo}

\begin{algorithm}
\caption{DECAF Algorithm}
\label{alg:main_loop}
\begin{algorithmic}[1]
\State \textbf{Initialize:} agent network $Q_{\theta}$, target network $Q_{\theta'}$
\State \textbf{Initialize:} $\epsilon$ (exploration rate)
\State \textbf{Initialize:} replay buffer $\mathcal{D}$

\For{episode = 1 to $N_{eps}$}
    \State Decay $\epsilon$ according to decay schedule    
    \State RunEpisode($Q_{\theta}$, $\epsilon$, $T$, env, $\mathcal{D}$)

    \If{episode \% k == 0}
        \Comment Run validation with $\epsilon = 0$
        \State RunEpisode($Q_{\theta}$, 0, $\infty$, env, $\mathcal{D}$)
        \Comment Save validation objective
    \EndIf
    \If{episode \% $\tau$ == 0}
        \State $Q_{\theta'}\leftarrow Q_{\theta}$
        \Comment Update target weights
    \EndIf
    
\EndFor

\State \textbf{Load} model with best validation objective value
\State Run $50$ validation episodes using RunEpisode($Q_{\theta}$, $ 0$, $\infty$, env, $\mathcal{D}$)
\State Save validation results

\end{algorithmic}
\end{algorithm}

\begin{algorithm}
\caption{RunEpisode (Executes a single episode )}
\label{alg:episode_loop}
\begin{algorithmic}[1]
\Function{RunEpisode}{$Q_{\theta}$, $\epsilon$, $T$, env, $\mathcal{D}$}
    \State Reset environment, get initial observation $\mathbf{o}_0$
    \For{t = 1 to $N_{steps}$}
        \State Sample a random number $r \in [0, 1]$
        
        \If{$r < \epsilon$} 
            \State $\mathbf{Q}_t = \mathbf{Q}_\text{random}$ \Comment{Random Q-values for exploration}
        \Else
            \State $\mathbf{Q}_t = Q_\theta(\mathbf{o}_t)$ \Comment{Q-values from the agent}
        \EndIf
        
        \State Use ILP to compute optimal action $\mathbf{a}_t$:
        \State \hspace{1em} $\mathbf{a}_t = \text{solve\_ILP}(\mathbf{Q}_t, \text{env.constraints})$
                
        \State Take step in environment: 
        \State \hspace{1em} $(\mathbf{R}_{f,t}, \mathbf{R}_{u,t}, \mathbf{o}_{t+1}) = \text{env.step}(\mathbf{a}_t)$
        
        \State Store transition $(\mathbf{o}_t, \mathbf{a}_t, \mathbf{R}_{u,t}, \mathbf{R}_{f,t}, \mathbf{o}_{t+1})$ in replay buffer $\mathcal{D}$
        
        \If{$t \% T == 0$}
            \State update($Q_\theta$, $Q_\theta'$, $\mathcal{D}, env)$
        \EndIf
    \EndFor
\EndFunction
\end{algorithmic}
\end{algorithm}

\begin{algorithm}[t]
\caption{Update for Joint Optimization}
\label{alg:JOUpdate}
\begin{algorithmic}[1]
\Function{Update}{$Q_{\theta}$, $Q_{\theta'}$, $\mathcal{D}$, env}

    \State Sample a mini-batch of $n$ experiences from replay buffer $\mathcal{D}$
    
    \For{each experience $\langle \mathbf{o}, \mathcal{A}, \mathbf{r}_u, \mathbf{r}_f, \mathbf{o}' \rangle$ in the mini-batch}
        \State Compute Q-values for the successor observation $Q_\theta(\mathbf{o}')$
        
        \State Solve ILP to get the optimal allocation $\mathcal{A}^*$ for the next observation $\mathbf{o}'$:
        \State \hspace{1em} $\mathcal{A}^* = \text{solve\_ILP}(Q_\theta(\mathbf{o}'), \text{env.constraints})$
        
        \State Compute Q-values for $\mathcal{A}^*$ using the target network:
        \State \hspace{1em} $Q_{\theta'}(\mathbf{o}', \mathcal{A}^*)$
        
        \State Compute the target for the TD update:
        \State \hspace{1em} $\text{target} = (1-\beta)  \mathbf{r}_u + \beta  \mathbf{r}_f + \gamma  Q_{\theta'}(\mathbf{o}', \mathcal{A}^*)$
        
        \State Compute the TD loss:
        \State \hspace{1em} $\text{loss} = \left( Q_\theta(\mathbf{o}, \mathcal{A}) - \text{target} \right)^2$
        
        \State Perform gradient descent on the TD loss to update $Q_\theta$
    \EndFor
\EndFunction
\end{algorithmic}
\end{algorithm}

\begin{algorithm}[t]
\caption{Update for Split Optimization}
\label{alg:SOUpdate}
\begin{algorithmic}[1]
\Function{Update}{$Q_\theta$, $Q_{\theta'}$, $\mathcal{D}$, env}
    \State Sample a mini-batch of $n$ experiences from replay buffer $\mathcal{D}$
    \State Unpack $Q_\theta$ into $U_\theta$ and $F_\theta$
    \State Unpack $Q_{\theta'}$ into $U_{\theta'}$ and $F_{\theta'}$
    
    \For{each experience $\langle \mathbf{o}, \mathcal{A}, \mathbf{r}_u, \mathbf{r}_f, \mathbf{o}' \rangle$ in the mini-batch}
        
        \State Compute the combined Q-values for the successor observation:
        \State \hspace{1em} $Q_\theta(\mathbf{o}') = (1 - \beta)  U_\theta(\mathbf{o}') + \beta  F_\theta(\mathbf{o}')$
        
        \State Solve ILP to get the optimal action $\mathcal{A}^*$ for the next observation $\mathbf{o}'$:
        \State \hspace{1em} $\mathcal{A}^* = \text{solve\_ILP}(Q_\theta(\mathbf{o}'), \text{env.constraints})$
        
        \For{model in \{$U, F$\}}
            \If{model is \( U \)}
                \State Set $M_\theta = U_\theta$, $M_{\theta'} = U_{\theta'}$, and $r = \mathbf{r}_u$
            \Else
                \State Set $M_\theta = F_\theta$, $M_{\theta'} = F_{\theta'}$, and $r = \mathbf{r}_f$
            \EndIf
            
            \State Compute the target for the TD update:
            \State \hspace{1em} $\text{target} = r + \gamma  M_{\theta'}(\mathbf{o}', \mathcal{A}^*)$
            
            \State Compute the TD loss:
            \State \hspace{1em} $\text{loss} = \left( M_\theta(\mathbf{o}, \mathcal{A}) - \text{target} \right)^2$
            
            \State Perform gradient descent on the TD loss to update $M_\theta$
        \EndFor
    \EndFor
    
\EndFunction
\end{algorithmic}
\end{algorithm}

Algorithm~\ref{alg:main_loop} shows the overall training loop used for our experiments, with Algorithm~\ref{alg:episode_loop} showing how each episode is executed. We decay epsilon to 0.05 over half of the total number of episodes. $T$, $k$, $\tau$ decide how frequently we learn, validate and update the target model respectively. 
Algorithm~\ref{alg:JOUpdate} and Algorithm~\ref{alg:SOUpdate} detail how the update step is performed for joint and split models. The update for FO is identical to SO, except omitting the update for the utility model.

\subsection{Model Architecture and Training}
All our models use a learning rate of 0.0003 with the Adam optimizer. For all environments except BiasedDM, we train for 1000 episodes, and run validation every 50 steps for model selection. For BiasedDM, we train for 200 episodes, and validate every 20 episodes.

The neural network architecture for all models is the same, with two fully connected hidden layers, of dimension 20, with ReLU activations. The output (1-dimensional) does not have any activation function. We implement our networks using pytorch. 

We use a replay buffer of size 250000, where one experience is a joint transition across all agents. During training, we sample experiences from the replay buffer, and for each experience, we evaluate actions for all agents using the current online network, solving the ILP to get the best joint action. Then, we score the post-decision state for each agent using the target network, and compute the MSE loss between the target value and value estimates of the selected action from the online network.

We ran all our main experiments on a university compute cluster, with each experiment running on a single CPU node with 12GB RAM. Experiment runtime varied with environment choice. Training a single model with one $\beta$ value took between 30 minutes (BiasedDM) and 2 hours (Matthew). Evaluation, as for the generalization experiments, was performed on a 2019 MacBook Pro, where one episode took 2-5 seconds to run, and we bootstrap over 5 runs. 

\section{Environment Details}
\label{sec:appendix1}
Here, we provide further details about the environments for reproducibility. We will also make the environment and training code available upon acceptance.

\begin{figure*}[t]
    \centering
    \begin{subfigure}[b]{0.19\linewidth}
        \centering
        \includegraphics[width=\linewidth]{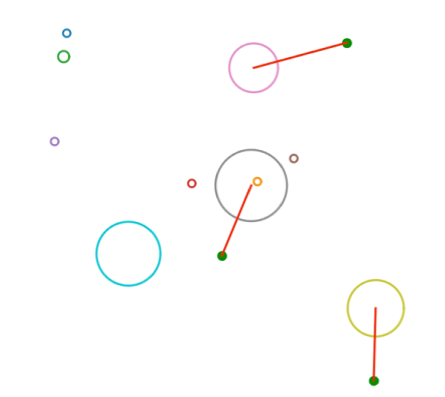}
        \caption*{Matthew}
    \end{subfigure}
    \begin{subfigure}[b]{0.19\linewidth}
        \centering
        \includegraphics[width=\linewidth]{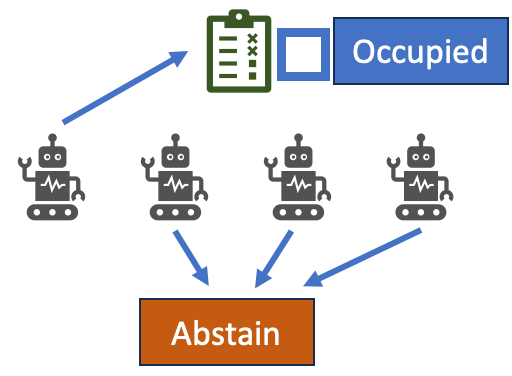}
        \caption*{JobAlloc}
    \end{subfigure}
    \begin{subfigure}[b]{0.19\linewidth}
        \centering
        \includegraphics[width=\linewidth]{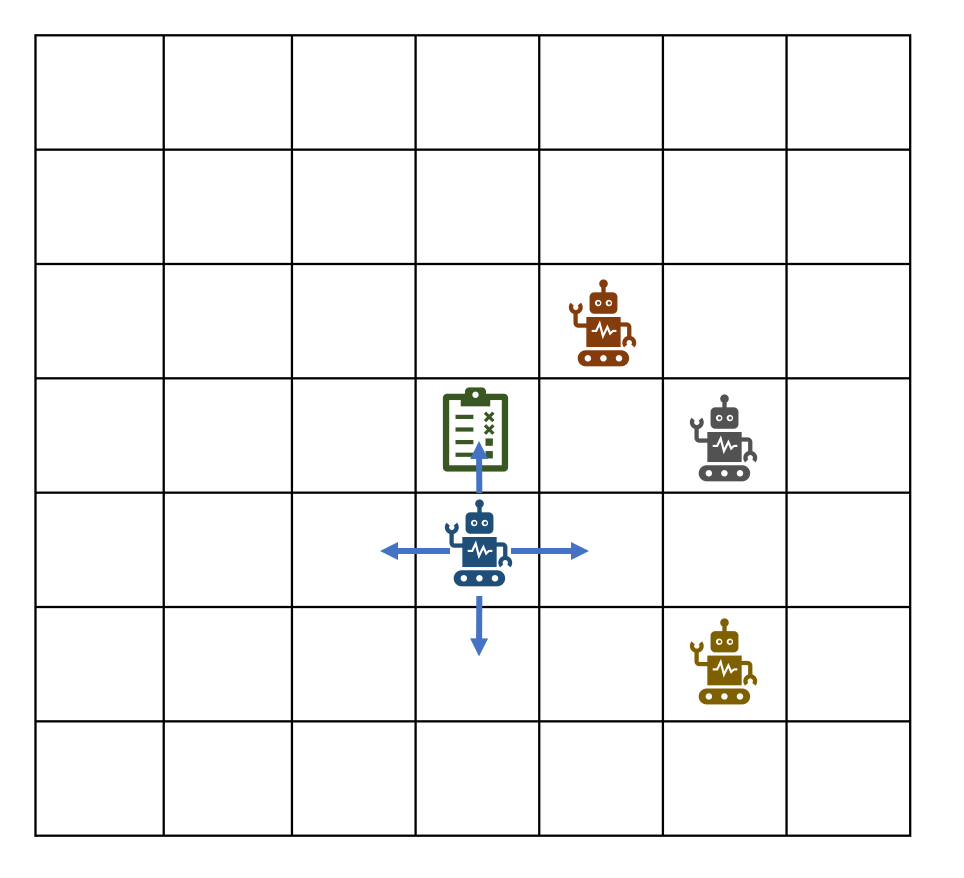}
        \caption*{Job}
    \end{subfigure}
    \begin{subfigure}[b]{0.19\linewidth}
        \centering
        \includegraphics[width=\linewidth]{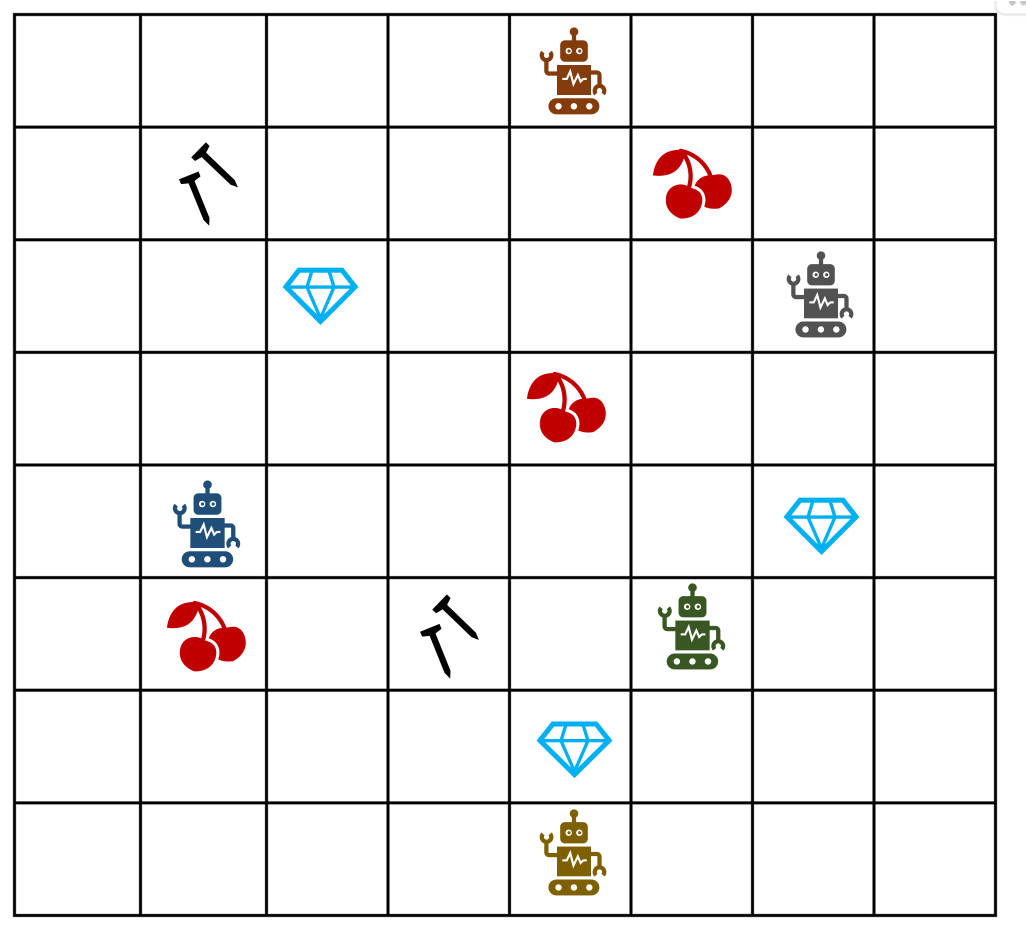}
        \caption*{Plant}
    \end{subfigure}
    \begin{subfigure}[b]{0.19\linewidth}
        \centering
        \includegraphics[width=\linewidth]{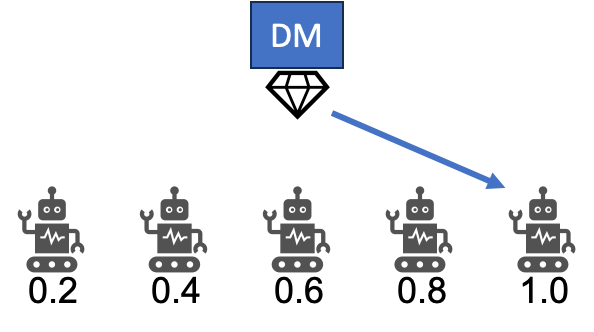}
        \caption*{BiasedDM}
    \end{subfigure}
    \caption{Illustration of all five environments}
    \label{fig:env_illustration}
\end{figure*}

\subsection{Matthew}
One episode for this environment lasts 200 steps. At each step, 10 agents and 3 resources are available on the map. Agent speeds grow proportionally to their size, and a size ceiling exists to prevent agents from growing too large for the environment bounds. Agent and resource positions are 2-D coordinates in $[0,1]$. At the beginning of each episode, agent and resource positions are randomly initialized.
To show the matthew effect, 4 agents are initialized to have a larger initial size than other agents. This allows them to reach resources faster. Agents receive a unit reward when they collect a resource, in addition to a small increase in size and speed. Resources allocated to agents are reserved, and other agents cannot pick them up. A new resource only spawns when an agent reaches its allocated resource, not when it is allocated.

\subsection{Job}
One episode for this environment lasts 100 steps. There are 4 agents on a $7\times 7$ grid, and agents can move in any cardinal direction or stay in their location. Agents cannot occupy the same location as any other agent, and they receive a unit reward when occupying the resource location. Attempting to move out of the grid results in a no-op. The agents start off at the corners of the grid. The job's location is fixed at the center of the grid, and its position remains fixed for the entire episode.
The objective in this environment is for agents to learn to share the job instead of occupying it alone.

\subsection{JobAlloc}
This is a simplified version of the Job environment, after removing the grid and casting it more directly as a resource allocation problem. We still maintain the core challenge of agents learning to give up resources at some point so that other agents can also occupy the job, by adding a constraint that agents can only occupy a resource if the resource is free at the beginning of the timestep. This requires a joint action where all agents decide to abstain, in order to change the agent occupying the resource. One episode lasts 100 steps, with 4 agents.

\subsection{Plant}
One episode for this environment lasts 200 steps. There are 5 agents on a $8 \times 8$ grid, where agents can move in cardinal directions.  The grid also contains 8 resources, which can be of three different types. Each agent gets a reward when they construct a `unit'. Each agent has a requirement of a set of resources it must collect  so that it can construct this unit. The requirements are: 
\[
\{ (2, 1, 0), (1, 0, 1), (1, 0, 0), (1, 3, 0), (0, 1, 2) \}
\]
For example, agent 1 requires two resources of type 1 and one resource of type 2 following which it can get a reward. Some agents are given easier requirements to fulfill, which creates a bias in the number of units agents produce. The resources and agent locations are randomly initialized at the beginning of each episode. The allocation follows a similar process to the Matthew environment, where actions are allocations of agents to resources, and other agents cannot pick up resources already allocated to other agents. When a resource is picked up, another resource of the same type appears in a random location on the map.

\subsection{BiasedDM}
One episode for this environment lasts 100 steps. At each time step, the decision maker allocates one resource to one of five agents. The utility of assigning the resource to each agent is different for the decision maker. In other environments, fairness is computed as the variance over the accumulated rewards for each agent. In this environment, however, fairness is computed over the resource rate, which is the fraction of steps in which an agent received the resource ($z_i\in[0,1]$).
\begin{align}
    z_i = \frac{\text{Num. resources}}{time}
\end{align}

This also results in much smaller variances, thus our hyperparameter search for this domain explores the higher range of $\beta$ values more.

\section{Learning with Other Fairness Functions}
\begin{figure*}
    \centering
    \includegraphics[width=0.98\linewidth]{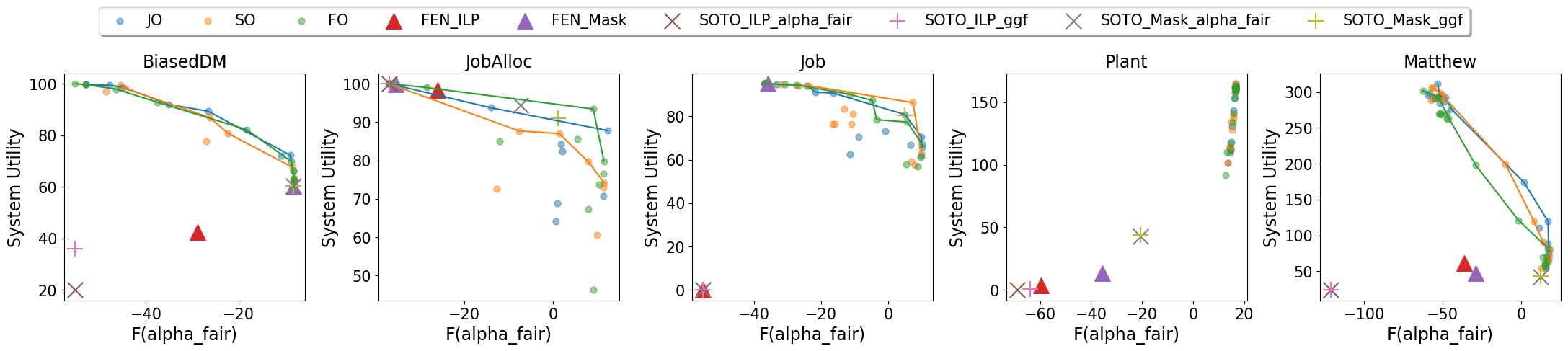}
    \includegraphics[width=0.98\linewidth]{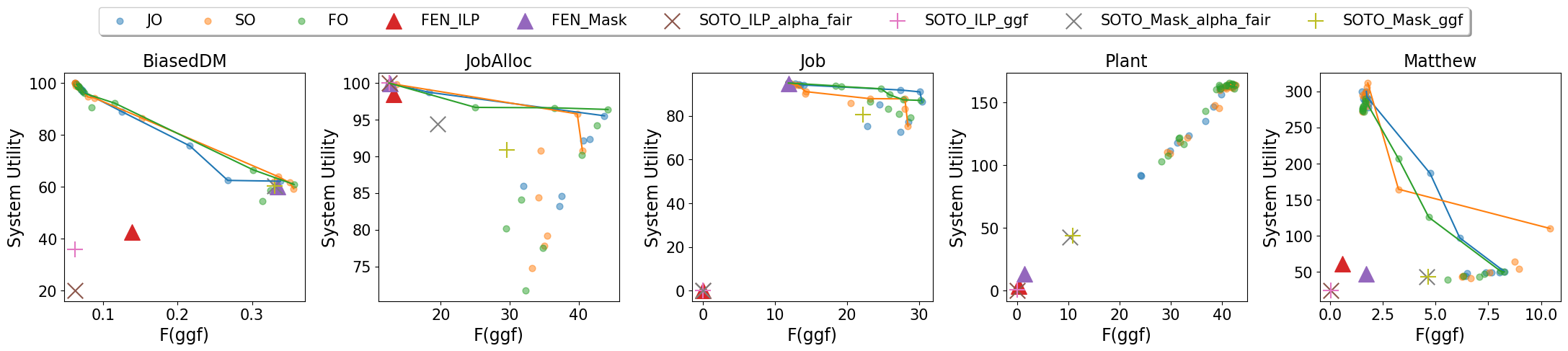}
    \includegraphics[width=0.98\linewidth]{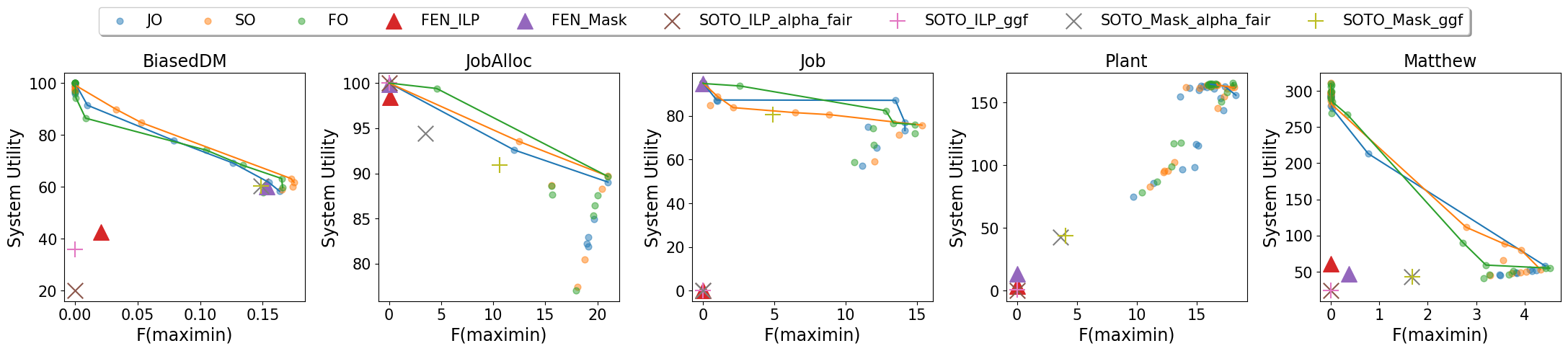}
    \caption{Results training DECAF with $\alpha$-fairness, GGF and maximin fairness functions. The lines show the Pareto fronts for each model type.}
    \label{fig:other_metrics}
\end{figure*}

As mentioned in the main text, we are not restricted to using variance and the given decomposition. We show here results for three more functions: $\alpha$-fairness, $GGF$, and  maximin.

\paragraph{1. \textbf{$\alpha$-fairness}}
The $\alpha$-fair function can be stated as:
\begin{align}
    F_\alpha(\textbf{Z}) =& \sum_{z_i\in\textbf{Z}}\begin{cases}
        \frac{z_i^{1-\alpha}}{1-\alpha} & \alpha\neq 1 \\
        \log z_i & \alpha=1
    \end{cases}
\end{align}
With $\alpha=1$, this is equivalent to proportional fairness or log Nash Welfare, both popular notions of fairness, while at $\alpha=0$ it represents the utilitarian objective. In our experiments, we use $\alpha=1$.

\paragraph{2. \textbf{Generalized Gini Function (GGF)}}
Given a sequence of positive, fixed, strictly decreasing weights $\mathbf{w}$, the GGF function can be stated as:
\begin{align}
    G_w(\textbf{Z}) = \sum_{i\in\alpha} \mathbf{w}_i z_i^{\uparrow}
\end{align}
Here, $\textbf{z}^{\uparrow}$ represents the vector obtained by sorting the $\textbf{Z}$ vector. This function can also represent a diverse set of SWFs including utilitarian and maximin fairness. In our experiments, we use decreasing negative powers of 2 as the weights (i.e. $\mathbf{w}=[1, 2^{-1}, 2^{-2}, \dots 2^{-(n-1)}]$). 

\medskip\noindent For both of the above objectives, we use the equal decomposition, where the fairness reward is computed as an equal division of the change in the metric value across all agents.
\begin{align}
    R_f(\mathbf{s}_t, \mathcal{A}^t) = \left[ \frac{\Delta \mathcal{F}|\mathcal{A}^t}{n} \right]_{i\in\alpha}
\end{align}

\paragraph{3. \textbf{Maximin Fairness}}
This function captures the worst off utility of any agent:
\begin{align}
    F_{MMF}(\textbf{Z}) = \min(\textbf{Z})
\end{align}
This is a hard objective to learn, as the maximin objective changes only when the worst-off agent is improved. 
We decompose this reward by combining the global signal with the per-agent contribution towards improving the minimum. Intuitively, each agent receives a reward for a joint action that improves the minimum, but the agents that were at the minimum (and improved) receive a larger reward. 
\begin{align}
    r_{f,i} &= \frac{\min(\textbf{Z}') - \min(\textbf{Z})}{n} \\
    r_{f,i} &= r_{f,i} +
    \begin{cases}
        z'_i - z_i & \text{if } z_i =\min(\textbf{Z}) \\
        0 & \text{otherwise}
    \end{cases}\\
    r_{f,i} &= r_{f,i} + 
    \begin{cases}
        z'_i - z_i & \text{if } z'_i =\min(\textbf{Z}') \\
        0 & \text{otherwise}
    \end{cases}\\
    R_{f,i} &= \frac{r_{f,i}}{\sum_j r_{f,j}} \left(\min(\textbf{Z}') - \min(\textbf{Z})\right)
\end{align}

\begin{figure*}[t]
    \centering
    \includegraphics[width=0.19\linewidth]{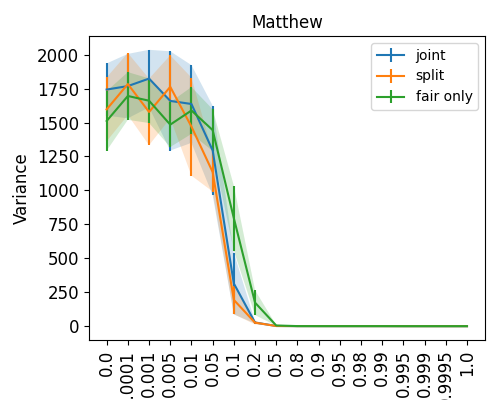}
    \includegraphics[width=0.19\linewidth]{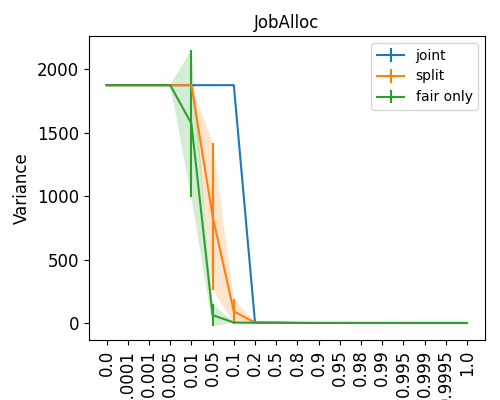}
    \includegraphics[width=0.19\linewidth]{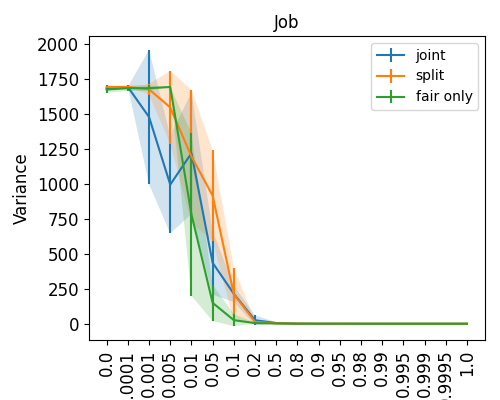}
    \includegraphics[width=0.19\linewidth]{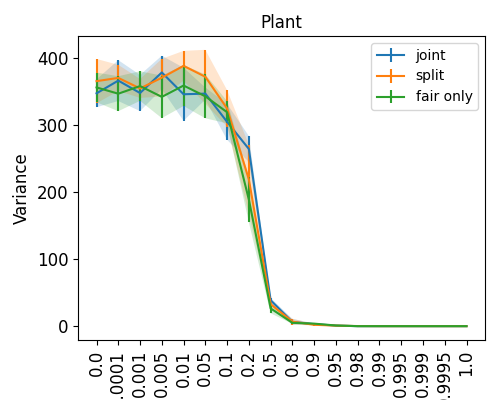}
    \includegraphics[width=0.19\linewidth]{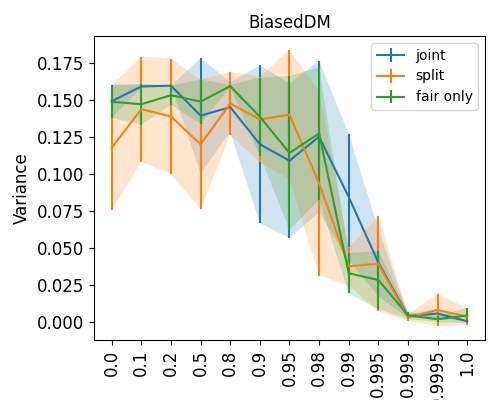}
    
    \includegraphics[width=0.19\linewidth]{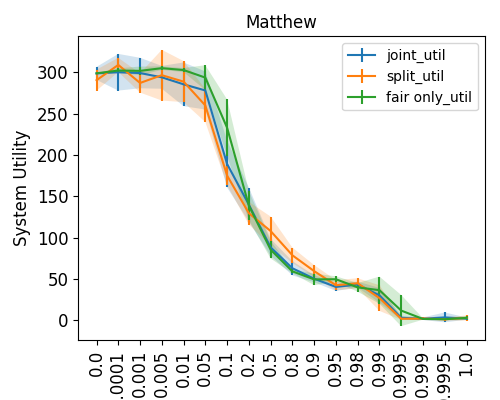}
    \includegraphics[width=0.19\linewidth]{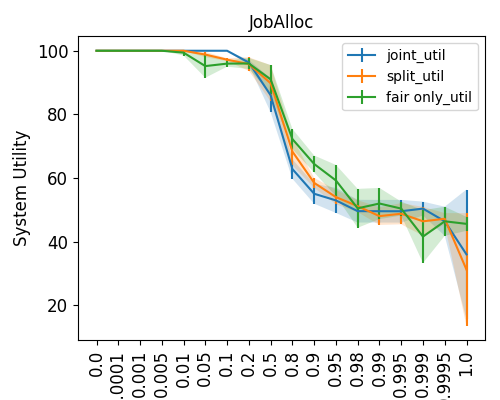}
    \includegraphics[width=0.19\linewidth]{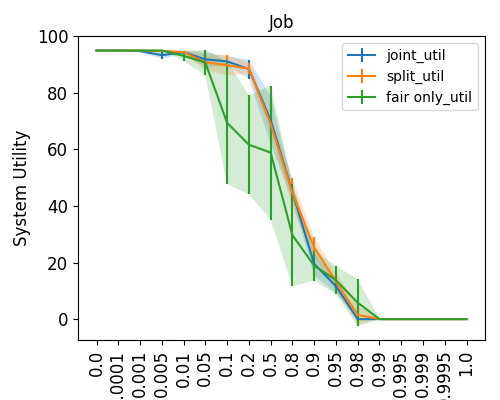}
    \includegraphics[width=0.19\linewidth]{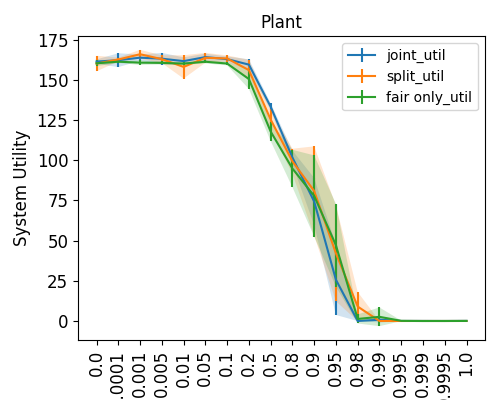}
    \includegraphics[width=0.19\linewidth]{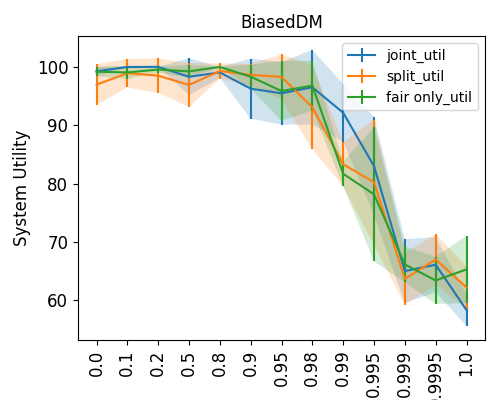}
    \caption{Effect of changing $\beta$ on variance (top row) and utility (bottom row) for all three methods on all five environments. 
    The shaded area shows the 1-$\sigma$ error bar. 
    We observe that the performance of FO has large variation for intermediate $\beta$ values.}
    \label{fig:error-bars}
\end{figure*}

\subsection{DECAF Results with Other Fairness Metrics}
Figure~\ref{fig:other_metrics} shows the results of our approaches when using different fairness functions, with the decompositions as described above.  In general, we observe our methods offer a good range of trade-offs in all environments, often Pareto-dominating SOTO and FEN. We describe some additional details about these results in this section.

In almost all environments, it is possible to get significant fairness improvement starting from the utilitarian model. The only exception is the Plant environment for $\alpha$-fair and GGF, where the utilitarian solution is almost optimal for fairness as well. In Job and JobAlloc, SOTO masked models are competitive, especially for the $\alpha$-fair fairness function, while some DECAF models are not as performant. This may suggest the existence of a more informative decomposition for this function which can lead to better learning. In BiasedDM, SOTO and FEN face a significant disadvantage, as they can not contend with the misaligned fairness signal. Thus, they can either know the system utility, or the payoffs on which fairness is based. In the prior case, the learned fair policy performs poorly for both utility and fairness, as it tries to equalize the accumulated value from the decision maker's perspective, or, as in the latter case (selected for the results shown), only looks at the resource distribution and has no idea of the utility to the decision maker.

$\alpha$-fair and GGF could also be decomposed to compute per-agent contributions, but since the metrics treat all agents indepependently, the interaction between agent utilities is harder to learn, especially for the Job and JobAlloc environment, where being fair requires a globally suboptimal decision in terms of utility. In other environments, the pressure for fairness is better captured in the ILP optimization, by the valuations of other agents that could benefit more from getting certain resources.

\begin{figure*}[t]
    \centering
    \subfloat[Matthew]{
    \includegraphics[width=0.19\linewidth]{Figures/Generalization/Approx_pareto/Matthew/Split_ApproxPareto.png}
    \label{fig:matthew_split}}
    \subfloat[JobAlloc]{
    \includegraphics[width=0.19\linewidth]{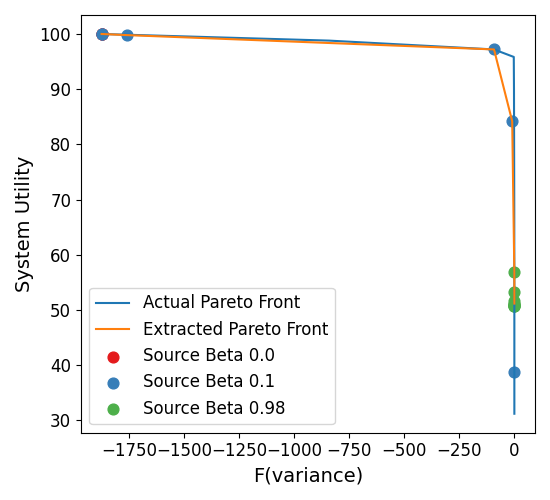}
    \label{fig:joballoc_split}}
    \subfloat[Job]{
    \includegraphics[width=0.19\linewidth]{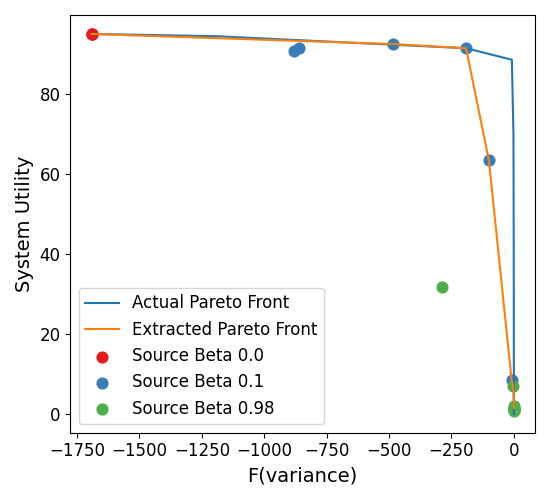}
    \label{fig:job_split}}
    \subfloat[Plant]{
    \includegraphics[width=0.19\linewidth]{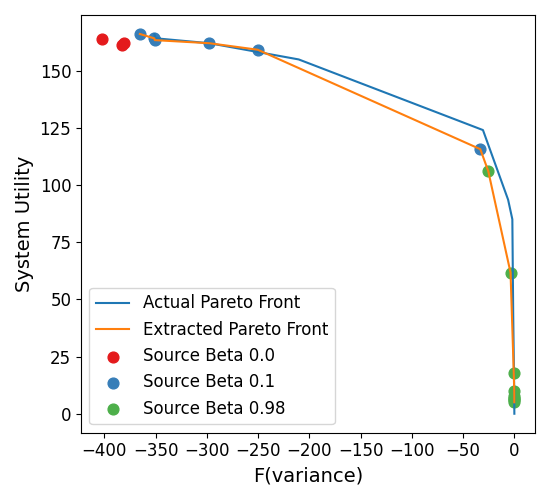}
    \label{fig:plant_split}}
    \subfloat[BiasedDM]{
    \includegraphics[width=0.19\linewidth]{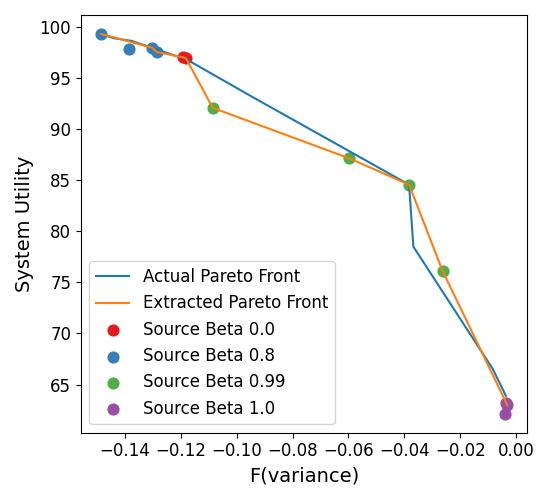}
    \label{fig:biaseddm_split}}
    \\
    \includegraphics[width=0.19\linewidth]{Figures/Generalization/Approx_pareto/Matthew/SplitNoUtility_ApproxPareto.png}
    \includegraphics[width=0.19\linewidth]{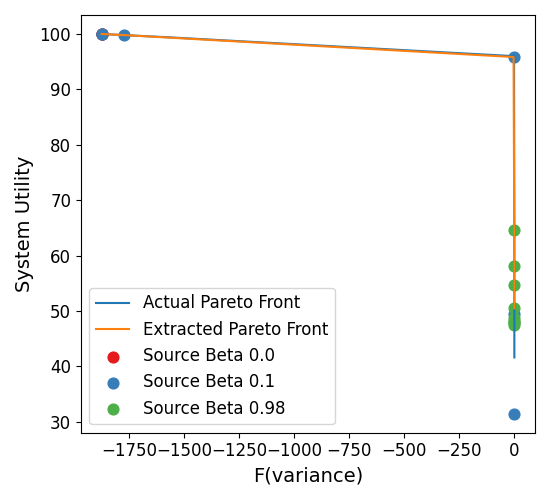}
    \includegraphics[width=0.19\linewidth]{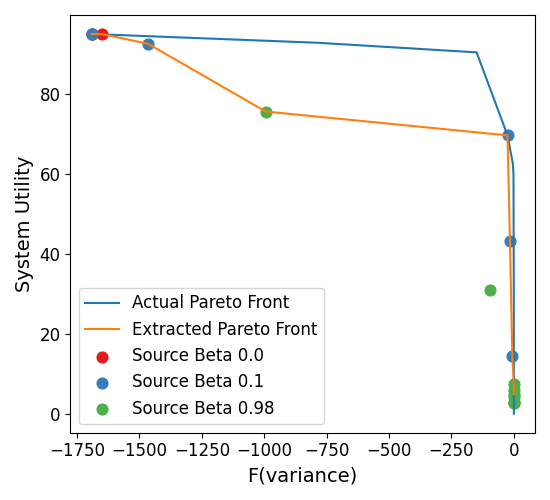}
    \includegraphics[width=0.19\linewidth]{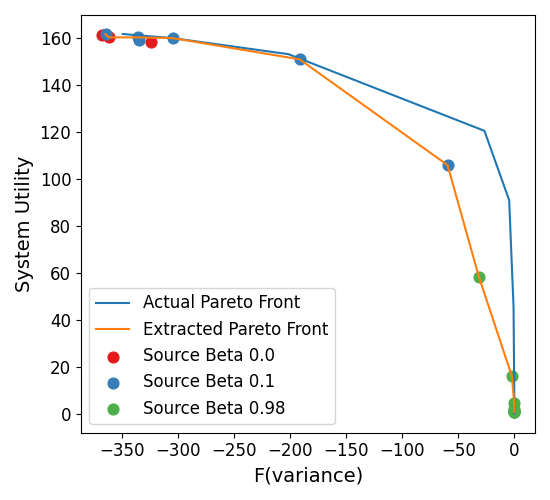}
    \includegraphics[width=0.19\linewidth]{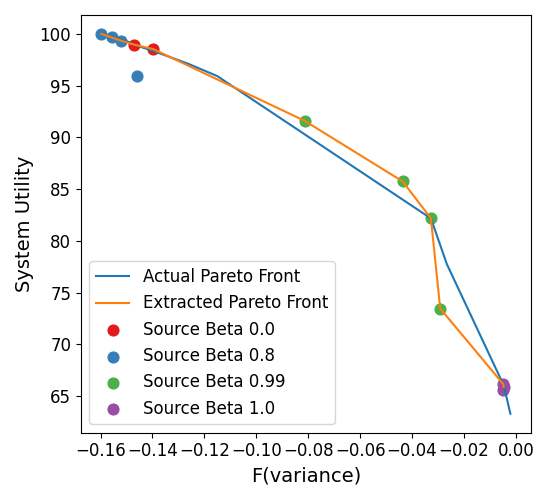}
    \caption{Approximate Pareto fronts for Split Optimization and Fair-Only Optimization models for variance across different environments. The top row represents SO models, while the bottom row represents FO models.}
    \label{fig:approx_pareto_fronts}
\end{figure*}

\subsection{On the Importance of Past Discounting and Warm Starts}
We also highlight two departures from prior work in our implementation of DECAF for learning fairness: (1) We discount the agent metrics $\textbf{Z}$ over the past, and (2) we implement warm starts for initializing $\textbf{Z}$ instead of initializing at 0. The past discounts are necessary to account for the time dependence of various normalized fairness metrics, like the Gini coefficient and coefficient of variation, or the variance over normalized agent metrics. In these cases, actions taken early on can cause larger changes to the fairness metric while actions taken after a sufficient history has been established have an imperceptible effect. This is undesirable in long-horizon settings, but can be remedied by past discounts, effectively `forgetting' events that happened far in the past. The warm starts function to counteract the other pitfall in some fairness metrics: The zero vector is perfectly fair, and any changes can incur a large fairness penalty. Adding randomized warm starts helps in preventing the algorithm from converging to this trivial solution. In practice, we keep the warm start values small, so that the past discounts effectively scale them down to zero over the course of an episode.

\subsubsection{Past Discounts and Warm Starts}
Past discounts and warm starts significantly helped in improving the stability of learning in our experiments, especially with variance as the fairness function. Small initial perturbations (that are smoothed out over time using past discounts) help in exploration of different states, as well as in avoiding the cold start problem where any action would lead to a worse state and hence agents learn to avoid taking beneficial actions. For variance, we used warm starts based on the size of the maximum rewards possible in an episode in each environment. For BiasedDM, the warm starts are used to create an initial 'resource rate' by normalizing based on the number of total warm start resources, as this environment uses resource rates as the payoff vector $\textbf{Z}$ . 

For $\alpha$-fair, we used $\alpha=1$ for the experiments, and hence we avoided using warm starts for this metric, as the log function is sensitive to small perturbations especially with near-zero utilities. For GGF, we used a warm start of 0.1 for each environment. For both of these functions, we used no past discounting, as the metrics are additive functions over agent utilities, so any past discounting would cause negative fairness gains. 
For maximin, we used the same past discounting and warm start values as variance. The warm start and past discount values for all environments are given in Table~\ref{tab:warm-past-disc}.

\begin{table}[]
\caption{Warm start ($w$) and past discount ($\gamma_p$) values for different environments and fairness functions used for DECAF.}
\label{tab:warm-past-disc}
\begin{minipage}{\linewidth}
\centering
\resizebox{\linewidth}{!}{%
\begin{tabular}{|c|c|c|c|c|c|c|}
\hline
\multicolumn{1}{|l|}{}                  &                     & \textbf{Matthew} & \textbf{Plant} & \textbf{Job} & \textbf{JobAlloc} & \textbf{BiasedDM} \\ \hline
\multirow{2}{*}{\textbf{$\alpha$-fair}} & \textbf{$w$}        & 0                & 0              & 0            & 0                 & 0                 \\ \cline{2-7} 
                                        & \textbf{$\gamma_p$} & 1                & 1              & 1            & 1                 & 1                 \\ \hline
\multirow{2}{*}{\textbf{GGF}}           & \textbf{$w$}        & 0.1              & 0.1            & 0.1          & 0.1               & 0.1               \\ \cline{2-7} 
                                        & \textbf{$\gamma_p$} & 1                & 1              & 1            & 1                 & 1                 \\ \hline
\multirow{2}{*}{\textbf{Maximin}}       & \textbf{$w$}        & 5                & 1              & 3            & 3                 & 2                 \\ \cline{2-7} 
                                        & \textbf{$\gamma_p$} & 0.995            & 0.995          & 0.995        & 0.995             & 0.999             \\ \hline
\multirow{2}{*}{\textbf{Variance}}      & \textbf{$w$}        & 5                & 1              & 3            & 3                 & 2                 \\ \cline{2-7} 
                                        & \textbf{$\gamma_p$} & 0.995            & 0.995          & 0.995        & 0.995             & 0.999             \\ \hline
\end{tabular}%
}
\end{minipage}
\end{table}

Given a warm start value of $w$, we compute an initial distribution of pseudo-resources by uniformly sampling from a region of width $w/4$ centered around $w$. For additive utilities, we use past discounts to decay the accumulated payoffs in $\textbf{Z}$ before adding the current time-step's reward. If $\gamma_p$ is the past discount factor, and $R_{t,i}$ is the resource value allocated to agent $i$ at time $t$, then we compute:
\begin{align*}
    z_i^{t+1} = \gamma_p z_i + R_{t,i}
\end{align*}

For averaged rewards (as in BiasedDM), where the payoff $z_i= {\#resources}/{\#timesteps}$, we compute the discount by reweighting both the numerator and denominator. Note that this can be easily extended to act as a `resource rate' where the denominator is the number of potential resources the agent could have received, instead of the time.
\begin{align*}
    z_i^t &= \frac{res_i}{t_i}\\
    res_i &= \gamma_p res_i + R_{t,i} \\
    t_i &= \gamma_p t_i + 1 \\
    z_i^{t+1} &= \frac{res_i}{t_i}
\end{align*}

\begin{figure*}[t]
    \centering
    \subfloat[Matthew]{
    \includegraphics[width=0.32\linewidth]{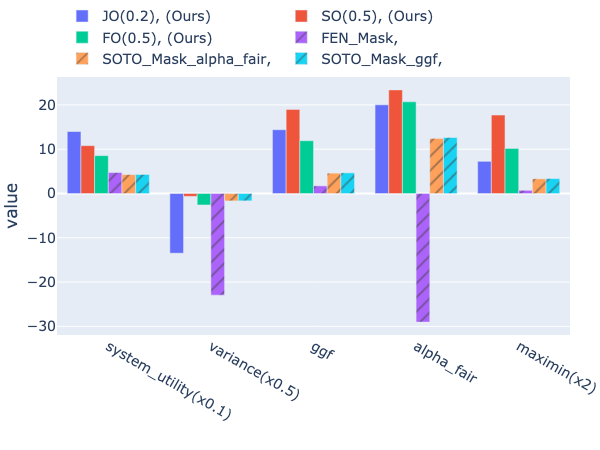}
    }
    \subfloat[JobAlloc]{
    \includegraphics[width=0.32\linewidth]{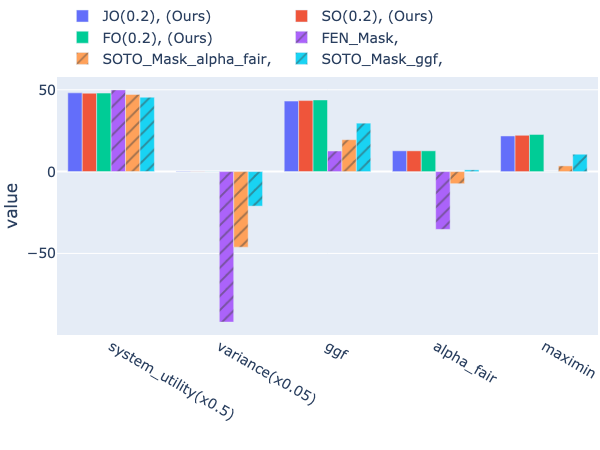}
    }
    \subfloat[Job]{
    \includegraphics[width=0.32\linewidth]{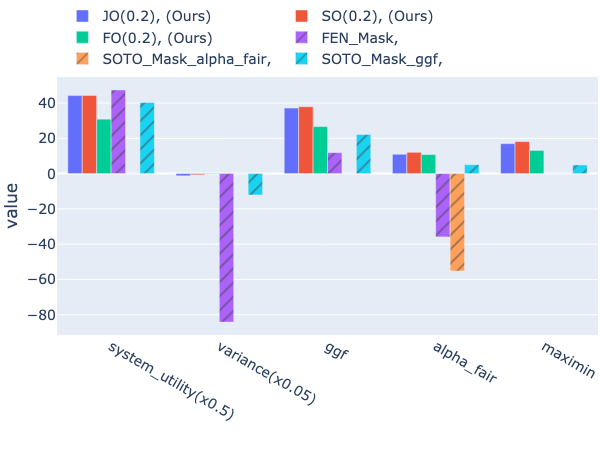}
    }
    
    \subfloat[Plant]{
    \includegraphics[width=0.32\linewidth]{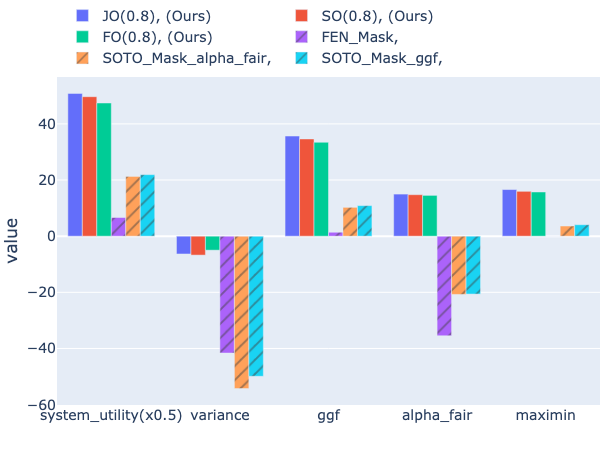}
    }
    \subfloat[BiasedDM]{
    \includegraphics[width=0.32\linewidth]{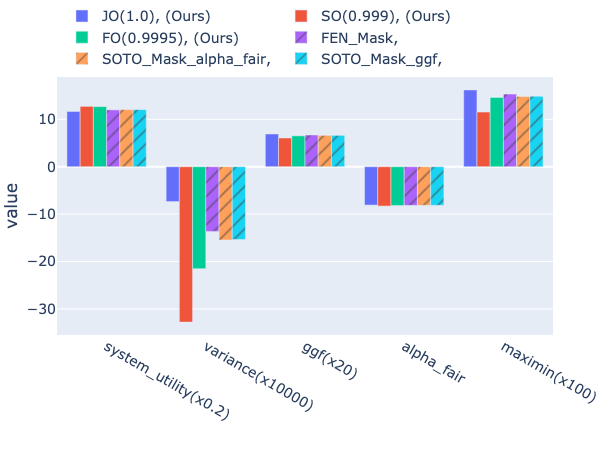}
    }
    \caption{Comparisons of selected DECAF models against the three baselines, scaled to fit on the same axes. We omit results for ILP versions of FEN and SOTO due to their poor performance. We selected DECAF models (trained on variance) that maximized $0.1 U - 0.9\var(\textbf{Z})$. The numbers in brackets denote the selected $\beta$ value for our models.}
    \label{fig:all_barplots}
\end{figure*}

\section{FEN and SOTO Implementation Details}
For FEN and SOTO, we use 5 times the number of training steps as used for DECAF, to allow the PPO based approaches sufficient trajectories to learn from, and to better match the experiments in the respective papers. We do not use past discounts and warm starts.
For BiasedDM, we chose the reward vector to be the number of resources each agent received (1 for each agent), instead of the biased utility to the decision-maker ($0.2\cdot i$ for agent $i$). Since our fairness metrics operate on the vector of resources as well, and since FEN and SOTO aim to learn fairness, this was the obvious choice. We also ran experiments where the reward vector was based on the decision-maker's biased utility, and found the solutions to be poor for both utility and fairness based on resources, so we omit them in our results. However, the greedy self-oriented model in SOTO was trained using the decision-maker's reward.

We used FEN without gossip, where the agent is directly communicated the distribution information, without need for inter-agent communication rounds. This is expected to be the stronger version of FEN. 
For all models, we used the same features from the DECA environments, containing the agent's relative advantage as a feature. In addition to this, SOTO also required the entire payoff distribution $\textbf{Z}$ and information about nearby neighbors for the tiered team-oriented network.  

The Job environment uses a shaping reward for the distance to the job as a scaled penalty. For the Job environment for SOTO, we reduced the weight of the shaping reward to 0.01 to minimize its effect on the optimization. We found this to be the best setting for learning in this environment. The ILP version of SOTO was not able to learn at all in this setting, even when we completely removed the shaping reward. Again, the inability to use shaping rewards is a significant handicap for methods like SOTO, since they optimize fairness over the rewards. Our methods explicitly decouple learning utility and fairness, so they are more expressive, and able to learn better especially at intermediate values of $\beta$.

\section{Extended Results for variance}

\begin{figure*}[t]
    \centering
    \subfloat[SO models]{
        \begin{minipage}{0.48\linewidth}
            \centering
            \includegraphics[width=0.48\linewidth]{Figures/Generalization/Matthew/Split_system_utility.png}
            \includegraphics[width=0.48\linewidth]{Figures/Generalization/Matthew/Split_fairness.png}
            \\
            \includegraphics[width=0.48\linewidth]{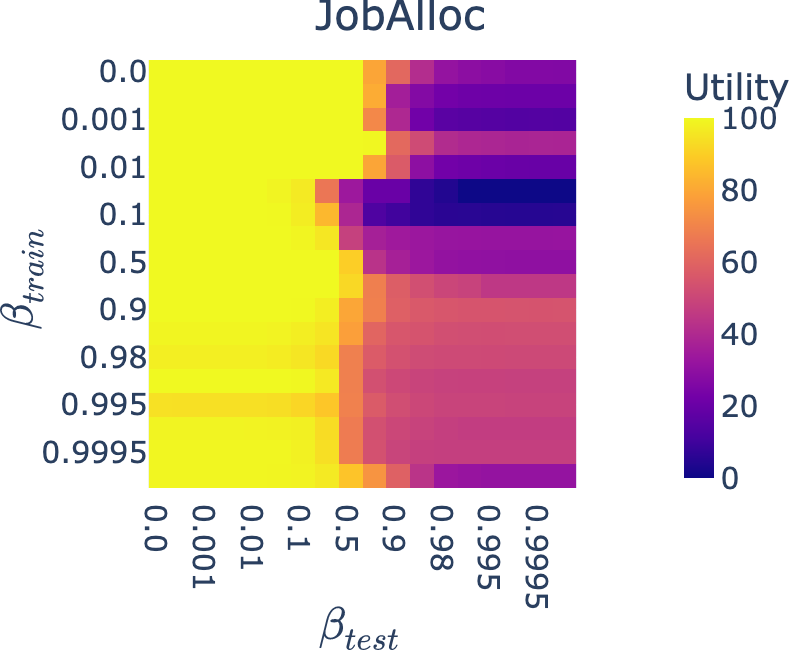}
            \includegraphics[width=0.48\linewidth]{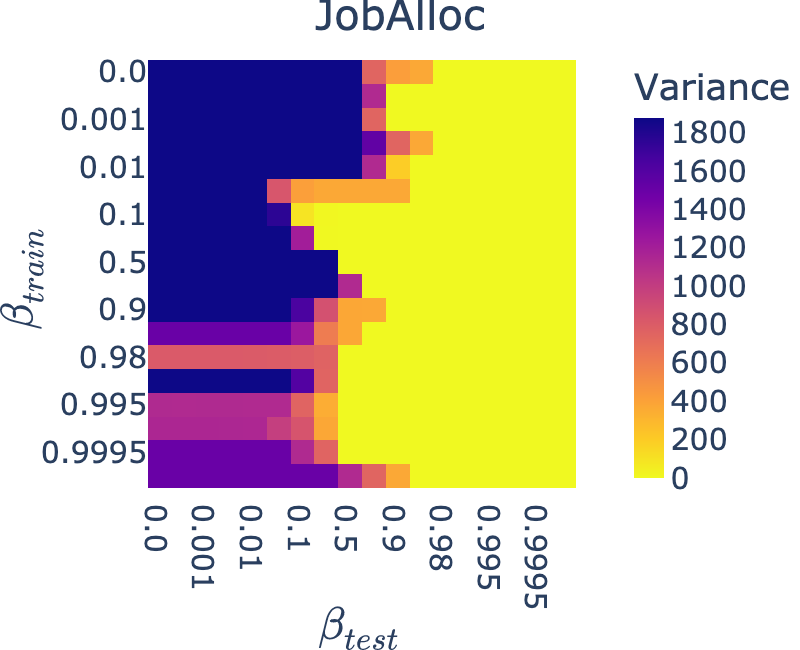}
            \\
            \includegraphics[width=0.48\linewidth]{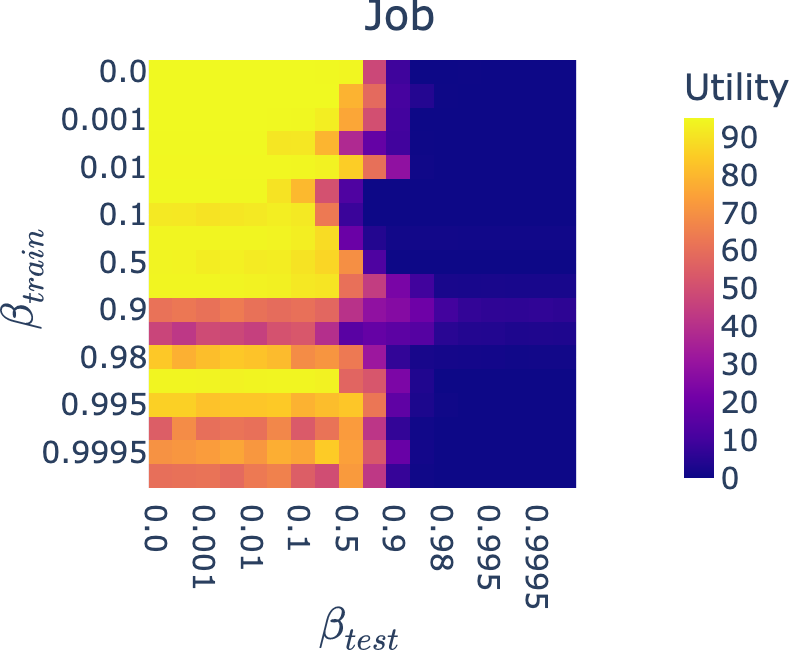}
            \includegraphics[width=0.48\linewidth]{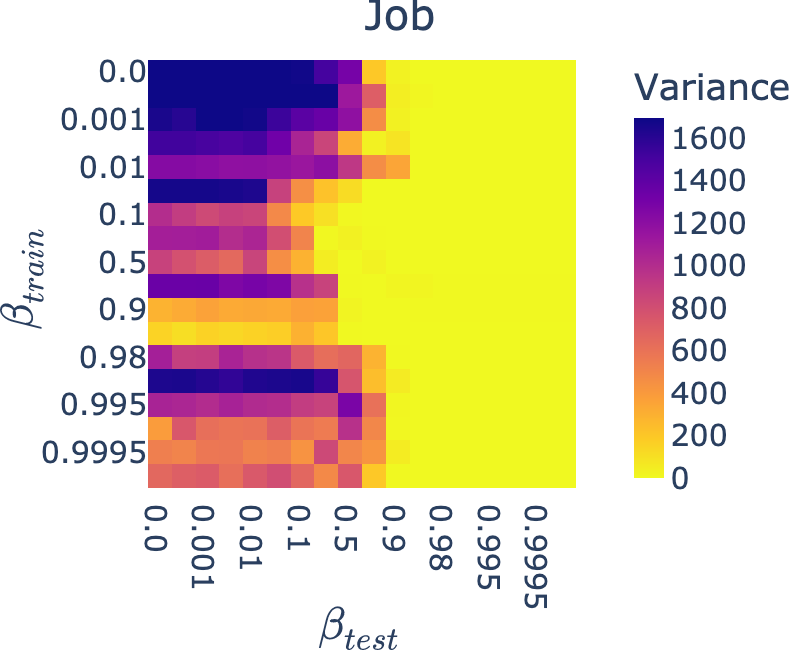}
            \\
            \includegraphics[width=0.48\linewidth]{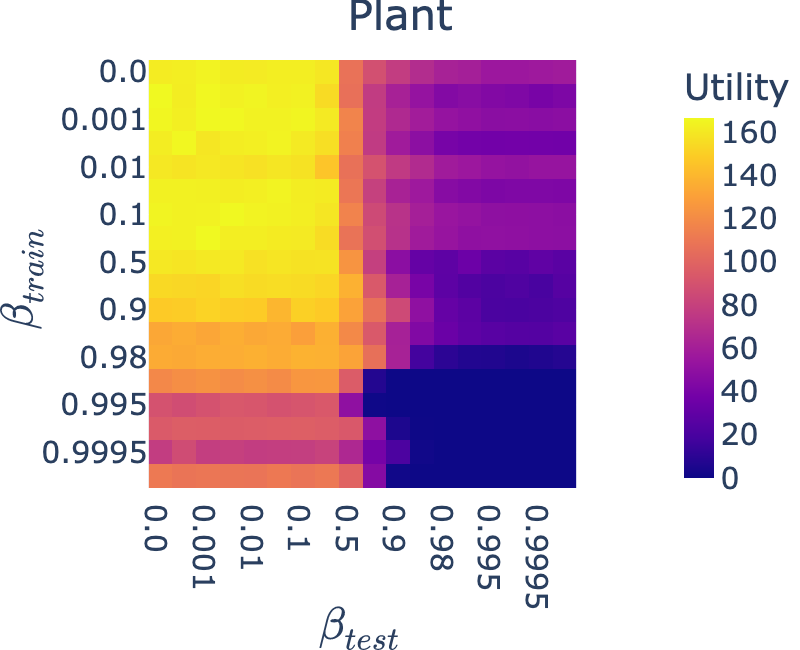}
            \includegraphics[width=0.48\linewidth]{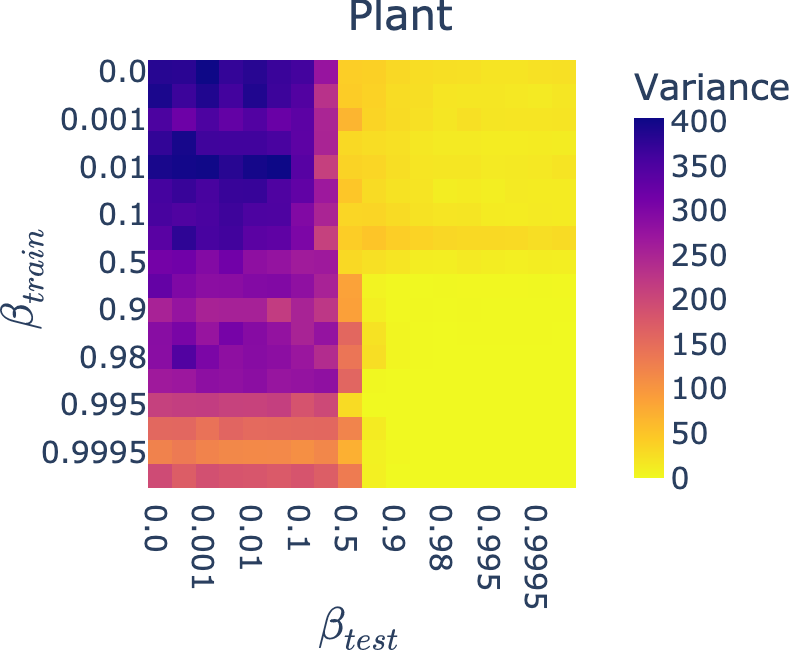}
            \\
            \includegraphics[width=0.48\linewidth]{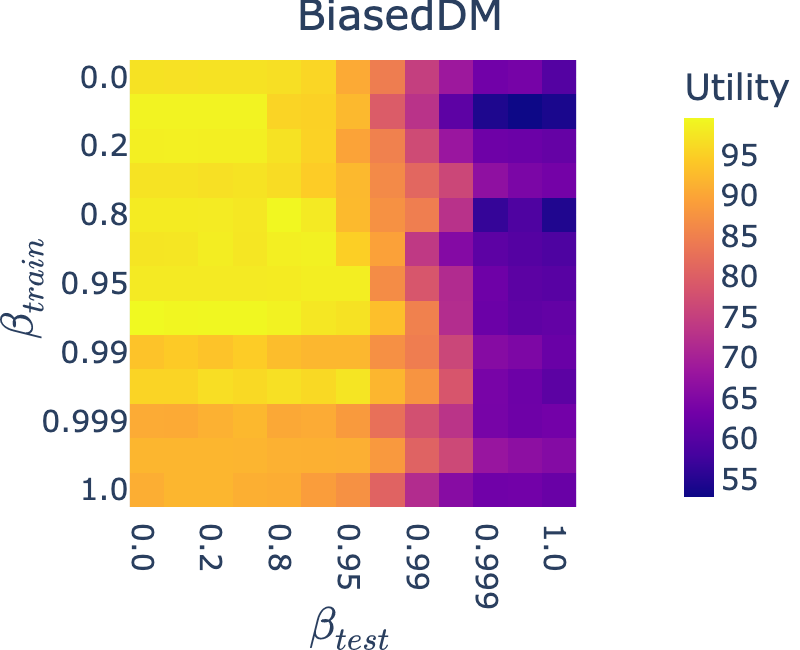}
            \includegraphics[width=0.48\linewidth]{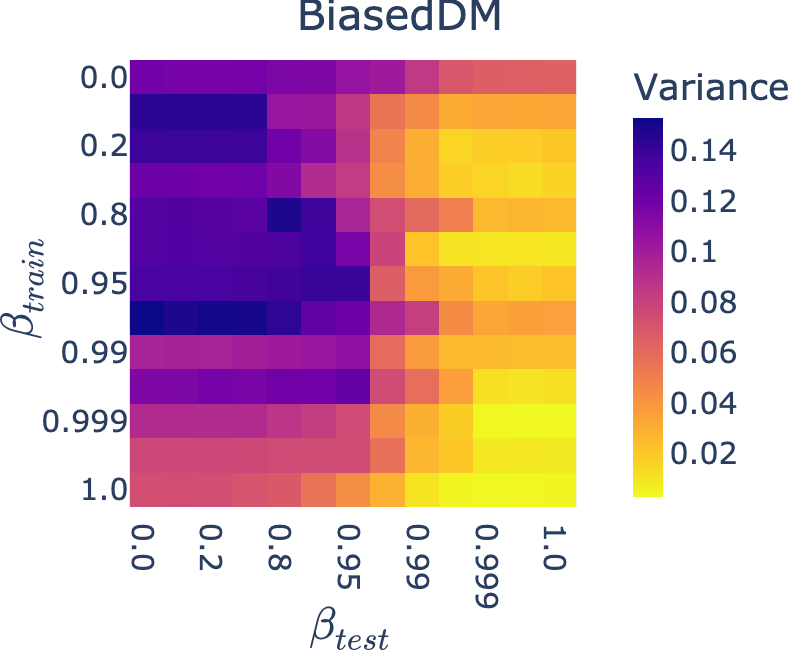}
        \end{minipage}
        \label{fig:all_results_gen_SO}
    }
    \subfloat[FO models]{
        \begin{minipage}{0.48\linewidth}
            \centering
            \includegraphics[width=0.48\linewidth]{Figures/Generalization/Matthew/SplitNoUtility_system_utility.png}
            \includegraphics[width=0.48\linewidth]{Figures/Generalization/Matthew/SplitNoUtility_fairness.png}
            \\
            \includegraphics[width=0.48\linewidth]{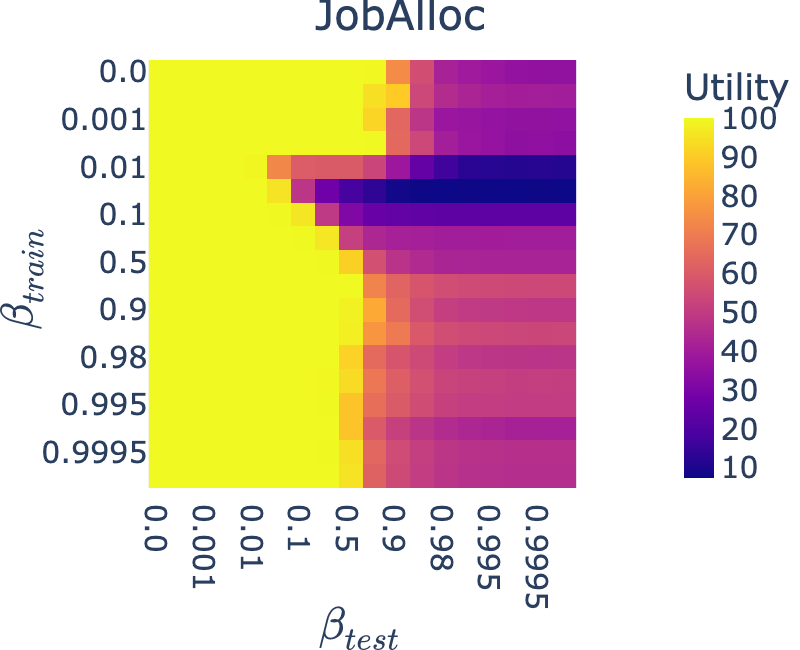}
            \includegraphics[width=0.48\linewidth]{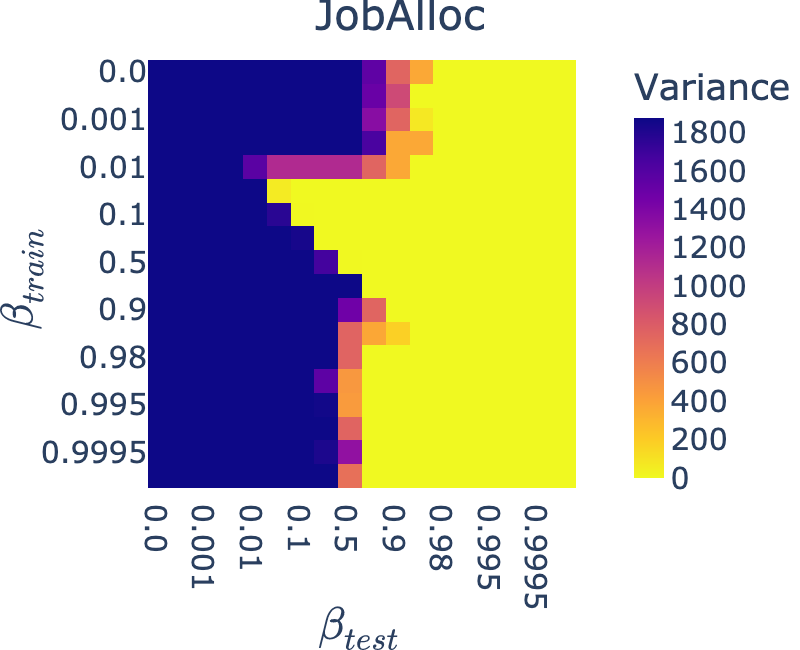}
            \\
            \includegraphics[width=0.48\linewidth]{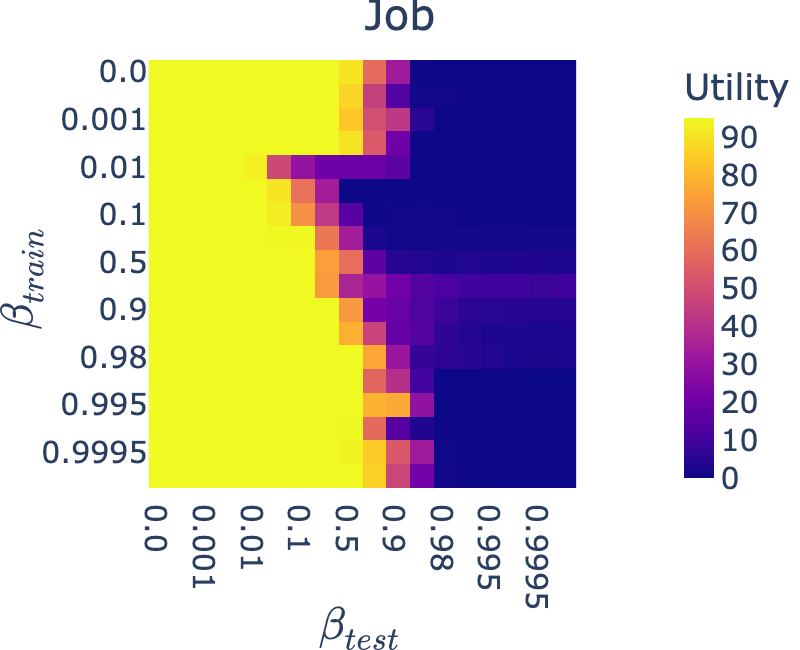}
            \includegraphics[width=0.48\linewidth]{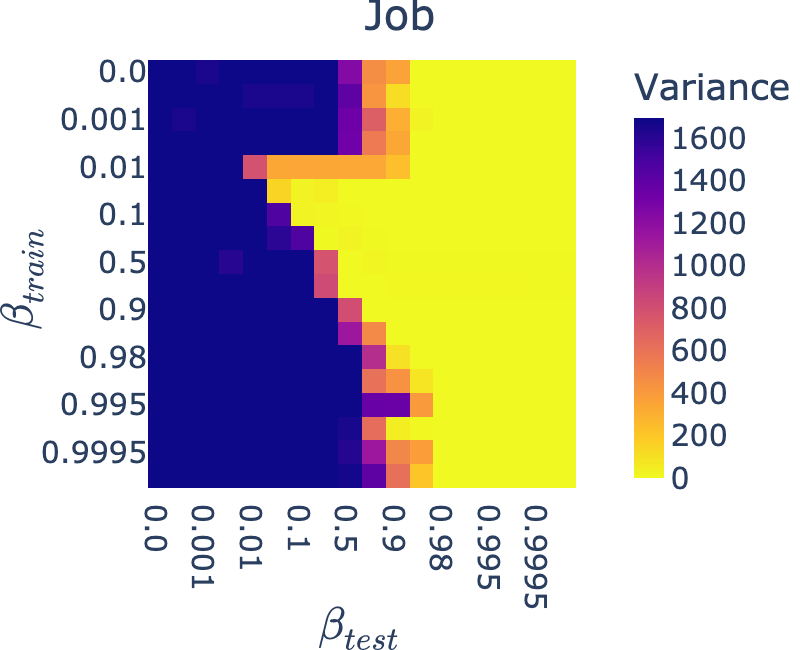}
            \\
            \includegraphics[width=0.48\linewidth]{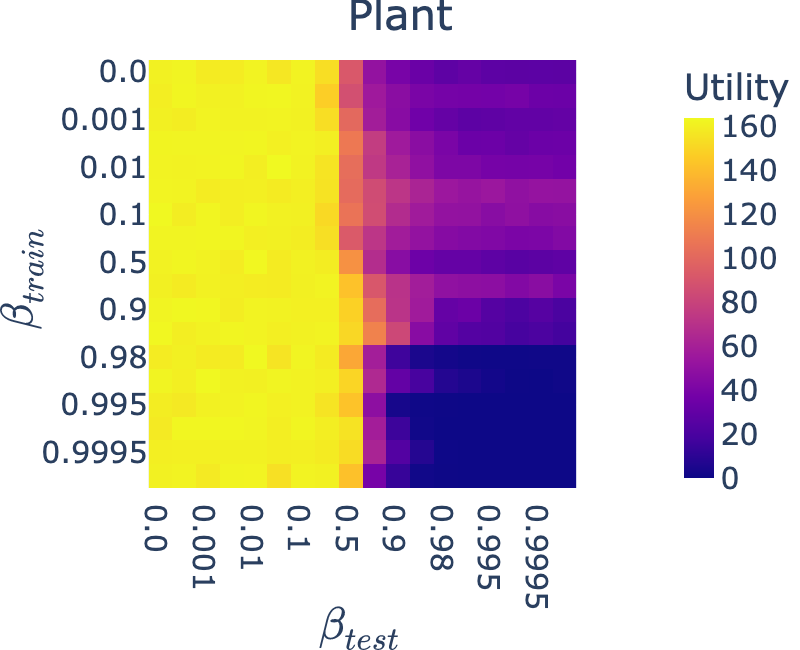}
            \includegraphics[width=0.48\linewidth]{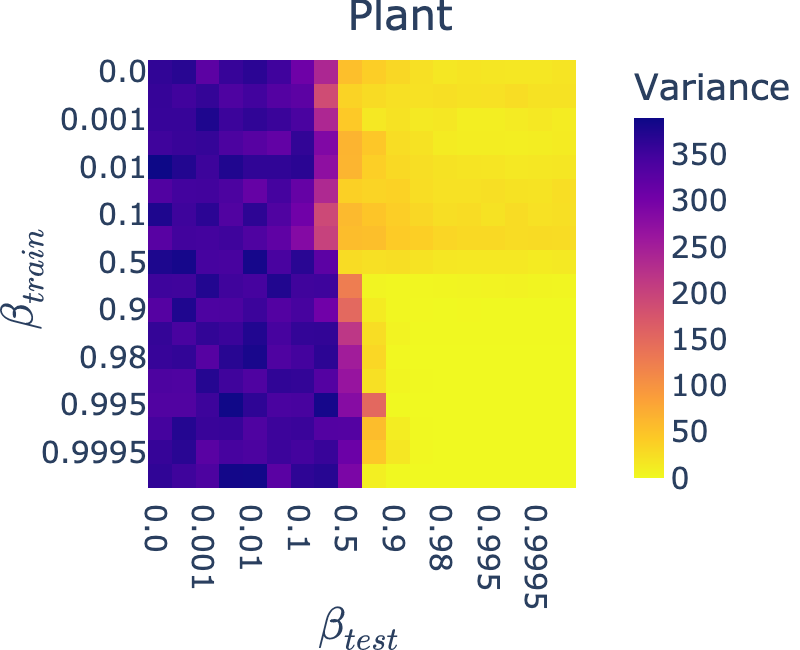}
            \\
            \includegraphics[width=0.48\linewidth]{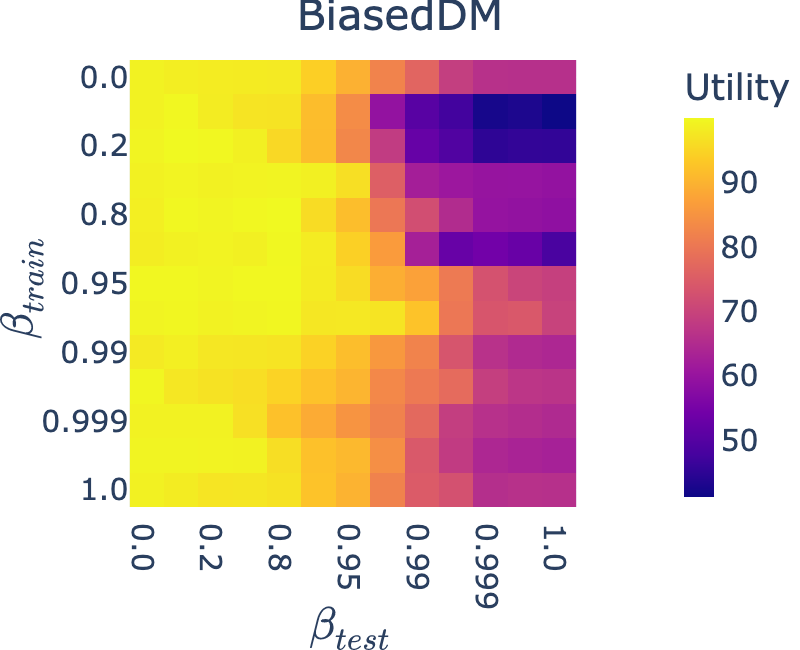}
            \includegraphics[width=0.48\linewidth]{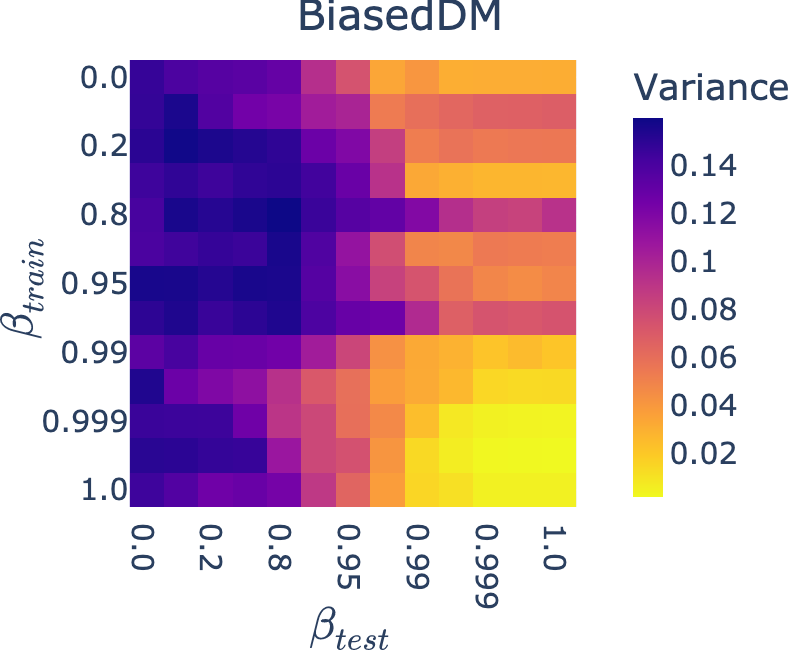}
        \end{minipage}
        \label{fig:all_results_gen_FO}
    }
    \caption{Evaluation of Split Optimization (left) and Fair-Only Optimization (right) models trained on $\beta_{train}$ (for variance) and evaluated on $\beta_{test}$ across different environments. Brighter colors indicate better outcomes.}
    \label{fig:combined_results}
\end{figure*}

Here we provide additional results for our methods, including providing confidence intervals for our main results, additional results for generalization, and more comparison to baselines. 

\subsection{Confidence Intervals for Results}
We provide confidence intervals for system utility and fairness separately as we vary $\beta$, as plotting this on a Pareto plot (as in the main results) would be hard to read (Figure~\ref{fig:error-bars}). An interesting thing to note here is that we see a phase lag between when variance starts to reduce, and when utility starts to drop. This shows there is a range of solutions where fairness can be improved without significantly harming the utility.

\subsection{Approximating the Pareto Front}
We also show generalization results by generating approximate Pareto fronts for all methods (Figure~\ref{fig:approx_pareto_fronts})   based on a limited set of $\beta_{train}$ models, showing that the SO and FO methods generalize very well even without training for all intermediate $\beta$ values. One interesting thing to note here is that FO falls under the Pareto front with intermediate $\beta$ values, showing how a tuned utility model also helps in guiding agents towards better decisions.

\subsection{Comparison of Selected Models}
Figure~\ref{fig:all_barplots} shows the performance of selected DECAF models trained on variance when evaluated on different metrics on all environments, compared to the baselines. We can see DECAF models perform much better on all metrics. For BiasedDM, all methods were able to converge to approximately the same fairest policy, so the differences in the figure are small. The variance for SO might appear large, but we point out that this variance is scaled 10000 times, and the actual values are all very close to zero.

\subsection{Generalization Heatmaps}
We also provide the generalization results for varying $\beta_{test}$ for both Split (Figure~\ref{fig:all_results_gen_SO}) and Fair Only (Figure~\ref{fig:all_results_gen_FO}) models for all environments. We note that the general trends noted in the main experimental results hold, with each model able to improve fairness as $\beta_{test}$ is increased. Further, in all cases, FO maximizes utility at $\beta_{test}=0$, and has a sharper transition between utility-maximizing and fairness-maximizing behavior.

\end{document}